\documentclass[10pt,conference]{IEEEtran}

\usepackage[utf8]{inputenc}
\usepackage[T1]{fontenc}
\usepackage{cite}
\usepackage{amsmath,amssymb,amsthm}
\usepackage{graphicx}
\usepackage{booktabs}
\usepackage{xcolor}
\definecolor{darkgreen}{HTML}{008000}
\usepackage{enumitem}
\usepackage{microtype}
\usepackage{placeins}
\usepackage[hidelinks]{hyperref}

\graphicspath{{figures/}}

\newtheorem{proposition}{Proposition}

\theoremstyle{definition}
\newtheorem{definition}{Definition}

\newtheorem{remark}{Remark}

\IEEEoverridecommandlockouts
\title{The Blind Spot Paradox: When Adaptive Classifiers Defeat Drift Detectors}
\author{\IEEEauthorblockN{Raphaël Minato, Fabrice Popineau, Arpad Rimmel, and Bich-Liên Doan\thanks{This research benefited from the support of the Chair ``Artificial intelligence applied to the banking domain and to global health'' led by CentraleSupélec and sponsored by LUSIS.}}
\IEEEauthorblockA{Université Paris-Saclay, CNRS, Laboratoire Interdisciplinaire des Sciences du Numérique, 91190, Gif-sur-Yvette, France\\
Email: \{raphael.minato, fabrice.popineau, arpad.rimmel, bich-lien.doan\}@centralesupelec.fr}}

\begin{document}
\maketitle

\begin{abstract}
  Monitoring concept drift from an adaptive classifier's error stream creates an operational conflict with the model's own update loop. When internal adaptation outpaces evidence accumulation, accuracy recovers before cumulative detectors (CUSUM, Page-Hinkley) can reach threshold. Instrumenting an Adaptive Random Forest (ARF) shows that surviving trees absorb $98.6\%$ of the post-drift error transient through incremental leaf updates alone. The first background tree swap accounts for just $0.71\%$ of this erased error volume, but drops external detection rates by $31$ percentage points. We derive the finite-horizon boundary where cumulative evidence fails to cross threshold and measure a critical magnitude floor ($\Delta e_c = 0.120$) below which false-alarm budgets preclude detection. This failure manifests as missed shifts on stationary streams and false-alarm flooding triggered by internal tree swaps on noisy baselines. We validate on synthetic shifts, ARMA--GARCH series (ProteuS), and tabular benchmarks (BAF, INSECTS); on the synthetic sweep at a standard threshold, the blind spot appears at $\Delta e \approx 0.25$, showing why classical benchmarks like SEA ($\Delta e \le 0.21$) failed to reach it.
\end{abstract}

\begin{IEEEkeywords}
  Concept drift, Adaptive blind spot, ADWIN, Page-Hinkley, Adaptive Random Forest.
\end{IEEEkeywords}

\section{Introduction}\label{sec:intro}

Coupling an adaptive classifier with an external drift monitor~\cite{gomes_arf_2017, bifet_adwin_2007} creates an unresolved operational conflict: while the classifier adjusts online to restore accuracy, the external monitor accumulates classification errors to alert human operators~\cite{gama_survey_2014, lu_concept_drift_2018}.

In an Adaptive Random Forest (ARF)~\cite{gomes_arf_2017}, internal ADWIN detectors~\cite{bifet_adwin_2007} replace obsolete trees with background trees upon detecting distributional shift. External cumulative monitors (e.g., CUSUM, Page-Hinkley) require sustained error bursts over time to cross their decision threshold. If classifier adaptation operates on a shorter timescale than evidence accumulation, transient error decays before the monitor alarms. We call this the \textbf{blind spot paradox}: rapid adaptation erases the statistical footprint of the change, causing detection rates to drop as shift magnitude increases beyond a certain point. Later threshold crossings, driven by post-recovery random walk, yield uninformative false alarms on already stabilized streams.\footnote{Code, data and the full analysis: \url{https://github.com/TheBlindSpotParadox/TheBlindSpotParadox-Experiments}}

The drift detection literature traditionally treats detection and adaptation as a cooperative sequence.
Surveys spanning two decades, including Klinkenberg \& Joachims~\cite{klinkenberg_2000}, Gama et al.~\cite{gama_survey_2014}, Lu et al.~\cite{lu_concept_drift_2018}, and Bayram et al.~\cite{bayram_2022}, focus primarily on the detector providing trigger signals to initiate model updates.

\noindent\textbf{Preliminary finding.} River~\cite{montiel_river_2021} batches ADWIN's test every $c$ steps, which delays its alarm by a uniform phase of up to $c - 1$ steps; default pipelines run the internal detector at $c = 1$ and the external monitor at $c = 32$. This clock mismatch is an implementation artefact (Section~\ref{sec:clock}), not the mechanism studied here: a PHT, which checks at every step, loses the same race (Section~\ref{sec:complexity}).

\smallskip
\noindent\textbf{Contributions.}
\begin{enumerate}[nosep, label=\textbf{(C\arabic*)}, start=1]
  \item \textbf{Mechanisms of signal erasure.} Modeling the classifier-monitor pairing as a race between stopping times, counterfactual ablations show that incremental leaf updates across surviving trees absorb $98.6\%$ of the error transient. The initial background tree swap erases only $0.71\%$ of the excess error volume, yet triggers a $31$-point drop in external detection.
  \item \textbf{Analytical feasibility and calibration floor.} We derive the finite-horizon boundary crossing conditions for cumulative monitors, establishing an empirical critical floor ($\Delta e_c = 0.120$) below which false-alarm constraints preclude detection, and calibrating static thresholds over the admissible $[0.20, 0.40]$ band.
  \item \textbf{Cross-benchmark failure modes.} Across synthetic shifts ($\Delta e \in [0.02, 0.50]$), GARCH volatility series (ProteuS~\cite{suarez_cetrulo_proteus_2025}), and tabular streams (BAF, INSECTS), we map the divergence between missed detections on stationary data and false-alarm flooding on noisy baselines, showing why historical benchmarks like SEA ($\Delta e \le 0.21$) failed to reach the blind spot boundary at the crossover threshold.
\end{enumerate}

\section{Background}\label{sec:background}

Drift monitoring splits into two paradigms~\cite{gama_survey_2014, lu_concept_drift_2018}: monitoring the input distribution $P(X)$, or monitoring the posterior $P(Y|X)$ through a classifier's binary error stream $e_t \in \{0,1\}$, which assumes that each true label arrives before the next prediction. This paper concerns the second, where an adaptive classifier produces the detector's input. Throughout, we call the external component the \emph{monitor} and reserve \emph{detector} for the classifier's internal ADWIN and for algorithm families in the literature sense.

\subsection{Detectors and the Adaptive Random Forest}\label{sec:adwin}
\textbf{ADWIN}~\cite{bifet_adwin_2007} maintains a sliding window split into two sub-windows and declares drift when their means diverge beyond a Hoeffding bound. River~\cite{montiel_river_2021} evaluates this test only every $c$ steps ($c=32$ by default); its Page-Hinkley Test (PHT) evaluates at every step.

\noindent\textbf{The PHT}~\cite{page_1954} is a CUSUM detector~\cite{basseville_1993}: an alarm fires when the running maximum of its accumulated statistic exceeds a threshold $\lambda > 0$ (governing the false-alarm rate~\cite{tartakovsky_2014}), net of a minimum-magnitude tolerance $\delta_P > 0$. River's implementation runs two-sided by default, alarming on a rise or a fall of the mean; we analyze the one-sided variant that accumulates excess error above a fixed reference and resets at zero (Eq.~\eqref{eq:cusum}). By Wald's identity, after an abrupt rise $\Delta e > \delta_P$ of the error rate (the \emph{effective error jump}) the expected time to alarm is $\mathbb{E}[\tau_{\mathrm{det}}] \approx \lambda / (\Delta e - \delta_P) = \mathcal{O}(1/\Delta e)$.

\noindent\textbf{Hoeffding Trees (HT)}~\cite{gama_survey_2014} are incremental decision trees, \emph{non-adaptive} once converged. \textbf{Adaptive Random Forests (ARF)}~\cite{gomes_arf_2017} maintain $M$ HTs, each paired with an internal ADWIN: upon an internal drift warning the ARF pre-trains a background tree and, upon confirmation, instantly replaces the active tree.

\subsection{Related Work}\label{sec:related}
Sequential change-point methods under independent observations are well understood: CUSUM achieves minimax optimality~\cite{moustakides_1986} and Lorden's criterion~\cite{lorden_1971} characterizes worst-case delay.
Detector benchmarks~\cite{goncalves_2014, lukats_2025} run under homoscedastic, i.i.d.\ conditions, on standard generators (SEA, Hyperplane, RandomRBF) or pre-generated bitstreams (e.g., SINE1, MIXED), with no classifier in the loop; work introducing adaptive ensembles such as ARF~\cite{gomes_arf_2017} reports predictive accuracy only.
The closest prior observation to our race condition is that of Stirling et~al.~\cite{stirling_2018}, who studied internal-detector choices in Hoeffding Adaptive Trees but evaluated classification accuracy only. Read~\cite{read_2018} flagged destructive adaptation in Hoeffding-tree ensembles as a \emph{classification} inefficiency; seen from the monitoring side, the same mechanism erases the monitor's evidence. Starvation (Section~\ref{sec:starvation}) concerns every cumulative-evidence monitor whose stopping time scales as $\Omega(\lambda / \Delta e)$---DDM~\cite{gama_ddm_2004}, EDDM~\cite{baena_eddm_2006}, ECDD~\cite{ross_ecdd_2012}. We confirm it empirically for PHT (Sections~\ref{sec:blindspot}--\ref{sec:experiments}) and EDDM (Section~\ref{sec:proteus}); for the other members it follows from the same accumulation argument (Prop.~\ref{prop:starvation}). Windowed and distributional monitors (ADWIN~\cite{bifet_adwin_2007}, KSWIN~\cite{raab_kswin_2020}) do not accumulate evidence, so this mechanism does not apply to them. Escaping this mechanism does not necessarily make them more reliable (Remark~\ref{rem:agnostic}).

\section{The Blind Spot Paradox}\label{sec:blindspot}

We instrument the ARF of River 0.23.0~\cite{montiel_river_2021} and compare the exact time step of each internal tree swap against a concurrent external monitor reading the \emph{same} error stream.

\subsection{Preliminary: ADWIN's Evaluation-Clock Delay}\label{sec:clock}

Because ADWIN evaluates its test only at multiples of the clock parameter $c$, we model the measured detection delay for an abrupt drift at unknown time $\tau^*$ as:
\begin{equation}\label{eq:clock}
  \mathrm{ADD} = \bigl(c - 1 - (\tau^* \bmod c)\bigr) + \tau_{\mathrm{stat}}\,,
\end{equation}
where $\tau_{\mathrm{stat}}$ is the irreducible statistical delay; the phase term counts completed evaluation intervals, a counting convention under which the decomposition is exact. For large drifts, $\tau_{\mathrm{stat}}$ is a small constant---ADWIN's Hoeffding bound certifies a shift of magnitude $\Delta e$ with $\mathcal{O}(1/\Delta e^{2})$ post-change samples~\cite{bifet_adwin_2007}---and delay is dominated by phase alignment: for uniform $\tau^*$, $\mathrm{ADD} \sim \mathcal{U}\{\tau_{\mathrm{stat}},\, \tau_{\mathrm{stat}} + c - 1\}$ (mean ${\approx}23$ at $c{=}32$). This is a deterministic artefact of \emph{mismatched} clocks, not an ADWIN property.

\subsection{The Race Condition}\label{sec:race}

Let $N_{\mathrm{swap}}(t)$ denote the cumulative number of internal tree replacements (tree swaps) in the adaptive classifier up to time $t$. This quantity is directly observable via River's internal \texttt{\_drift\_tracker} counter.

\begin{definition}[Adaptation Onset, Preemption \& Blind Spot]\label{def:tau_arf}
  The \textbf{adaptation onset} $\tau_{\mathrm{ARF}}$ is the delay from drift onset $\tau^*$ to the first internal tree swap: $\tau_{\mathrm{ARF}} := \inf\bigl\{t > \tau^* : N_{\mathrm{swap}}(t) > N_{\mathrm{swap}}(\tau^*) \bigr\} - \tau^*$.
  \textbf{Preemption} occurs when the first swap---the forest's first committed structural response---precedes the external alarm at $\tau_{\mathrm{det}}$: $\tau_{\mathrm{ARF}} < \tau_{\mathrm{det}}$. This event orders the two detections; it is observable online but does not measure the erasure, which the ablations below attribute mostly to incremental learning.
  A \textbf{blind spot} occurs when the erasure itself completes before the alarm: writing $W$ for the length of the post-drift error transient (formalized in Section~\ref{sec:starvation}), $W < \tau_{\mathrm{det}}$---the monitor is starved of the drift's evidence.
\end{definition}

\noindent\textbf{Drift generator.}
At time $\tau^*$ the concept boundary shifts from $x_0 + x_1 > 0$ to $x_0 + x_1 > b$ ($b > 0$); for $x_0, x_1 \sim \mathcal{N}(0,1)$ the theoretical instantaneous error jump is $\Delta e = \Phi(b/\sqrt{2}) - 0.5$. Estimating $\Delta e$ empirically by averaging classification errors over a post-drift window fails: because the ARF adapts rapidly within that window, the post-change error burst is diluted into the recovered stream (at $b = 4.0$, a standard sliding-window average yields $\widehat{\Delta e} \approx 0.02$ against the true initial shock $\Delta e \approx 0.50$). All reported experiments therefore evaluate pipelines against the theoretical pre-adaptation jump $\Delta e$.

\noindent\textbf{Decomposing internal adaptation mechanisms.}
To isolate the influence of tree updates from tree swaps, four counterfactual ablations replay each stream from the change instant $\tau^*$ under identical historical states: a frozen ensemble (no structural or parametric updates), incremental learning without replacements, learning restricted to the initial replacement, and the default unablated ARF. Measuring the differential reduction in excess error shows that incremental learning in the surviving trees accounts for $98.6\%$ $[98.5, 98.8]$ of the signal erasure. In contrast, the first tree substitution represents merely $0.71\%$ $[0.63, 0.78]$ of erased error volume, followed by $0.56\%$ for subsequent swaps (medians of per-cell shares, for this order of enabling the mechanisms). This first swap nevertheless weighs decisively on the monitoring outcome: under $\lambda = 50$ and $\Delta e = 0.33$, an external CUSUM monitor achieves $0/100$ detections with the default ARF, recovering to $18/100$ when only the first swap is permitted, and to $49/100$ when all swaps are blocked. The delay $\tau_{\mathrm{ARF}}$ thus marks adaptation onset rather than completion. The ablations also show that the first swap acts as an early marker rather than an indicator of full recovery: the Spearman rank correlation between the first swap and the time at which excess error halves reaches $0.73$ in the mid-magnitude band ($0.15 < \Delta e \le 0.35$) and $0.78$ in the strong band ($\Delta e > 0.35$), but drops to $0.13$ (mid band) with the time of full stabilization back to pre-drift accuracy ($-0.14$ for $\Delta e \le 0.15$).

\subsection{Ensemble Acceleration}\label{sec:hydra}

The ARF maintains $M$ trees, each with its own internal ADWIN.
Upon drift, tree $i$ detects and swaps at time $\tau_i$, and the ensemble starts adapting at the first of these swaps:
\begin{equation}\label{eq:hydra}
  \tau_{\mathrm{ARF}} = \min_{1 \le i \le M} \tau_i\,.
\end{equation}
If the delays $\tau_i$ were independent with common distribution $F$, order statistics~\cite{david_order_2003} would define the expected ensemble delay as $\mathbb{E}[\tau_{(1:M)}] = \int_0^{\infty} (1 - F(t))^M \, dt$.
Under memoryless delays (exponential $F$), this yields an exact $M$-fold speedup: $\mathbb{E}[\tau_{(1:M)}] = \mathbb{E}[\tau_1] / M$.
More generally, for any continuous distribution whose density $f$ satisfies $f(0) > 0$, the first-order approximation near zero provides the asymptotic scaling $\mathbb{E}[\tau_{(1:M)}] \sim 1 / (M f(0))$ as $M \to \infty$.
However, because trees share one stream and one Poisson generator, they are not statistically independent (Section~\ref{sec:starvation_boundary}), requiring direct empirical quantification of the ensemble speedup.

\textbf{Empirical validation.}
We evaluate the adaptation delay of a single Hoeffding Adaptive Tree (HAT, $M = 1$)~\cite{bifet_hat_2009} against the ARF ensemble ($M = 10$).
Because some HAT runs do not trigger before the post-drift horizon ($T = 4{,}000$), their delays are right-censored. We therefore evaluate speedup using restricted mean survival times (RMST, $\mathbb{E}[\min(\tau, T)]$) and median survival times estimated via Kaplan--Meier.
At moderate drift ($\Delta e = 0.14$), the ARF achieves a speedup of at least $4.12\times$ on restricted means ($95\%$ CI $[3.45, 4.96]$) and $5.32\times$ on medians.
At strong drift ($\Delta e = 0.33$), the acceleration reaches at least $7.99\times$ on restricted means ($[6.40, 9.72]$), whereas the median ratio contracts to $3.10\times$.
These mean ratios represent rigorous lower bounds on the true acceleration factor, as HAT's true mean is right-truncated by $T$ while ARF is never censored.
The divergence between mean and median dynamics reveals that the delay distributions do not scale homothetically across drift magnitudes: as $\Delta e$ shifts, heavy-tailed delays emerge in HAT, pulling the mean ratio upward while the median ratio compresses.

Fitting a power law $\tau \approx C\,(\Delta e)^{-\alpha}$ across the full grid ($\Delta e \in [0.02, 0.50]$) yields flattened exponents $\hat\alpha_{\mathrm{HAT}} \approx 1.02$ and $\hat\alpha_{\mathrm{ARF}} \approx 0.98$, capped by background noise swaps at low magnitudes ($\Delta e < 0.10$, Remark~\ref{rem:bgswap}) where false alarms saturate delays.
Excluding this noise band ($\Delta e \ge 0.10$) and fitting median delays robust to censoring, ARF scaling steepens to $\hat\alpha = 1.77$ ($95\%$ bootstrap CI $[1.67, 1.90]$).
Log-space polynomial regression ($\ln \tau_{\mathrm{ARF}} = \beta_0 + \beta_1 \ln \Delta e + \beta_2 (\ln \Delta e)^2$) reveals a positive quadratic coefficient $\beta_2 = 0.50$ $[0.30, 0.76]$, confirming that the response curves on log--log axes rather than following an invariant power law.
The resulting speedup is magnitude-dependent; this parallel stopping dynamic ($\tau_{\mathrm{ARF}} = \min_i \tau_i$) mirrors the multichart sequential scheme of Tartakovsky~\cite{tartakovsky_2005_multichart}, which stops at the first of $M$ parallel charts to cross threshold.

\subsection{Starvation: Transient Signal Erasure}\label{sec:starvation}

When the ARF adapts, internal learning collapses a persistent drift into a transient error signal of length $W$ ($[\tau^*, \tau^* + W]$) at height $\Delta e$---an instance of \emph{transient change detection}~\cite{nikiforov_2012,tartakovsky_2023_transient} where tests for persistent shifts degrade.
A CUSUM detector with threshold $\lambda$ and tolerance $\delta_P$ accumulates evidence $S_t$:
\begin{equation}\label{eq:cusum}
  S_t = \max\bigl(0,\, S_{t-1} + e_t - p_0 - \delta_P\bigr)\,.
\end{equation}
During the transient, each increment contributes net drift $\mathbb{E}[e_t - p_0 - \delta_P] = \Delta e - \delta_P > 0$ (canceling baseline $p_0$).
Afterwards ($t > \tau^* + W$), accuracy recovers ($p_t = p_0$), increments turn negative ($-\delta_P < 0$), and evidence decays toward zero. The monitor is \emph{starved} (Definition~\ref{def:tau_arf}) when the transient ends before $S_t$ reaches $\lambda$.

\begin{proposition}[Starvation: Finite-Horizon Boundary Crossing]\label{prop:starvation}
  Write $W$ for the transient length, $\mu := \Delta e - \delta_P > 0$, $S_{\tau^*} = s_0 \ge 0$, and $S^* := \sup_{t\in[\tau^*,\, \tau^* + W]} S_t$. Then
  \begin{equation}\label{eq:markov_bound}
    \mathbb{P}\bigl(S^* \ge \lambda\bigr) \;\le\; W \exp\!\Bigl(-\tfrac{2}{W}\bigl(\lambda - s_0 - \mu W\bigr)_{+}^{2}\Bigr)\,.
  \end{equation}
  Equivalently, the monitor misses the transient with probability at least $1-\varepsilon$ as soon as
  \begin{equation}\label{eq:starve_boundary}
    \lambda \;\ge\; s_0 + \mu W + \sqrt{\tfrac{W}{2}\,\ln(W/\varepsilon)}\,.
  \end{equation}
  The second term ($\mu W$) is the mean accumulated evidence; the third is the stochastic fluctuation margin, dominating whenever $\mu\sqrt{W} \lesssim 1$.
\end{proposition}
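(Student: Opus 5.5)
The plan is to get rid of the reflecting barrier at zero, reduce $S^*$ to the maximum of centered partial sums of bounded increments, and then combine Hoeffding's inequality with a union bound over the at most $W$ time steps in the window.

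\textbf{Unrolling the recursion.} Write $X_t := e_t - p_0 - \delta_P$ for $t\in(\tau^*,\tau^*+W]$. During the transient we have $\mathbb{E}[X_t]=\mu$, and $X_t$ takes values in an interval of length $1$, since $e_t\in\{0,1\}$. Unrolling the CUSUM recursion of Eq.~\eqref{eq:cusum} gives the Lindley representation
\[
S_{\tau^*+k}=\max\Bigl(s_0+\textstyle\sum_{j=1}^{k}X_{\tau^*+j},\ \max_{1\le i\le k}\sum_{j=i+1}^{k}X_{\tau^*+j}\Bigr).
\]
Every candidate in this maximum is a sum of at most $k\le W$ increments, possibly with $s_0$ added. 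I would first argue that the reset branches are dominated: a reset only happens when the accumulated sum has gone below zero, so restarting from $0$ is never larger than continuing from $s_0\ge 0$ plus that sum. With this, $S^*\ge\lambda$ implies that $\sum_{j\le k}X_{\tau^*+j}\ge \lambda - s_0$ for some $k\le W$.

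\textbf{Tail bound for each $k$.} Fix $k$ and centre the sum, writing $\sum_{j\le k}X_{\tau^*+j} = k\mu + M_k$, where $M_k$ is a sum of $k$ centered terms, each with range $1$. If the errors are treated as independent, Hoeffding's inequality applies directly. If the classifier-in-the-loop induces dependence, I would instead use Azuma--Hoeffding with respect to the filtration generated by the stream, which only needs the increments to have conditional mean at most $\mu$ during the transient. Either way,
\[
\mathbb{P}(M_k\ge \lambda-s_0-k\mu)\le \exp\bigl(-2(\lambda-s_0-k\mu)_+^2/k\bigr).
\]
Since $k\le W$, we have $(\lambda-s_0-k\mu)\ge(\lambda-s_0-\mu W)$. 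On the positive part, which is the regime where the bound is nontrivial, this gives
\[
\frac{(\lambda-s_0-k\mu)_+^2}{k}\ \ge\ \frac{(\lambda-s_0-\mu W)_+^2}{W}.
\]
So each term is at most $\exp(-2(\lambda-s_0-\mu W)_+^2/W)$. A union bound over the $W$ values of $k$ then yields Eq.~\eqref{eq:markov_bound}.

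\textbf{Deriving Eq.~\eqref{eq:starve_boundary}.} Setting the right-hand side of Eq.~\eqref{eq:markov_bound} at most $\varepsilon$ and solving gives $(\lambda-s_0-\mu W)_+\ge\sqrt{(W/2)\ln(W/\varepsilon)}$, which is exactly Eq.~\eqref{eq:starve_boundary}. The interpretive remark then follows from comparing $\mu W$ with $\sqrt{W\ln(W/\varepsilon)/2}$: the fluctuation term dominates when $\mu\sqrt W\lesssim\sqrt{\ln(W/\varepsilon)}$, which is of order $1$ up to the logarithm.

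\textbf{Main obstacle.} I expect the difficulty to be the reflection at zero. The reset branches $\sum_{j=i+1}^k X_{\tau^*+j}$ must be controlled without adding another union factor of $W$, which would give $W^2$. The domination argument above handles this cleanly: on the event that a reset occurred at time $i$, the unreset path satisfies $s_0+\sum_{j\le i}X_{\tau^*+j}\le 0$, so it lies below the reset path from $0$. If that fails, I would fall back on Doob's maximal inequality applied to the exponential supermartingale $\exp(\theta(S_t-\mu t))$. This handles the maximum over $t$ directly and would even remove the factor $W$, at the cost of a slightly different constant.
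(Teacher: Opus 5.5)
The step that fails is your domination of the reset branches, because it runs in the wrong direction. From $S_t=\max(0,S_{t-1}+X_t)\ge S_{t-1}+X_t$ you get $S_{\tau^*+k}\ge s_0+\sum_{j\le k}X_{\tau^*+j}$ for every $k$. So the reflected statistic always sits \emph{above} the unreset walk, strictly so once that walk has dipped below zero. Your own remark that the unreset path ``lies below the reset path from $0$'' says exactly this.

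Consequently the inclusion $\{S^*\ge\lambda\}\subseteq\bigcup_{k\le W}\{s_0+\sum_{j\le k}X_{\tau^*+j}\ge\lambda\}$, on which your union bound rests, is false. Take $s_0=0$, errors $e=0$ on the first $m$ post-change steps and $e=1$ afterwards. Then $S^*=(W-m)(1-p_0-\delta_P)$, while no free partial sum exceeds $\max\{0,\,S^*-m(p_0+\delta_P)\}$. Every $\lambda$ strictly above the latter and at most $S^*$ is therefore crossed by $S$ but by no partial sum. Bounding only the branch started at $s_0$ controls a smaller event and gives no upper bound on $\mathbb{P}(S^*\ge\lambda)$.

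The fallback does not fix this either. Since $S_t-S_{t-1}=\max(X_t,-S_{t-1})\ge X_t$, the process $\exp(\theta(S_t-\mu t))$ is not a supermartingale; with conditional mean $\mu$ it is a submartingale. Doob's inequality then only reduces the problem to bounding $\mathbb{E}[\exp(\theta S_{\tau^*+W})]$. The natural bound on that moment of the reflected statistic goes through the same decomposition over reset points, which brings back a factor of order $W$.

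The missing idea, and the paper's route, is to take the union bound over the reset index rather than the end time, and to absorb the maximum over end times with a maximal inequality. Set $\xi_j:=X_{\tau^*+j}-\mu$ and $a:=\lambda-s_0-\mu W$. Any branch of your Lindley representation that reaches $\lambda$ forces $\sum_{i<j\le k}\xi_j\ge a$ for some $0\le i<k\le W$: the reset branches lack $s_0$, but $s_0\ge 0$ and $(k-i)\mu\le\mu W$. Hence $\{S^*\ge\lambda\}\subseteq\bigcup_{0\le i<W}\{\max_{i<k\le W}\sum_{i<j\le k}\xi_j\ge a\}$. For each fixed $i$, apply Doob's maximal inequality to $\exp(\theta\sum_{i<j\le k}\xi_j)$, with Hoeffding's lemma $\mathbb{E}[e^{\theta\xi_j}\mid\mathcal{F}_{j-1}]\le e^{\theta^2/8}$ and $\theta=4a/W$. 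This bounds the $i$-th event by $\exp(-2a^2/W)$ when $a>0$; otherwise \eqref{eq:markov_bound} is trivial. Summing the $W$ terms gives \eqref{eq:markov_bound} with factor $W$ rather than $W^2$.

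Your end-time organisation can be salvaged only for i.i.d.\ errors, through Lindley's time-reversal duality. There $S_{\tau^*+k}$ is dominated in law by $s_0$ plus the running maximum of an ordinary walk of length $k$, and you still need a maximal inequality rather than plain Hoeffding. Under the martingale dependence you wanted to allow, time reversal is not available in general. The remaining steps are fine: the monotonicity $(\lambda-s_0-k\mu)_+^2/k\ge(\lambda-s_0-\mu W)_+^2/W$, the inversion to \eqref{eq:starve_boundary}, and the reading of the fluctuation term up to the logarithm.
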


\begin{proof}
  Let $\xi_i := e_i - (p_0 + \Delta e)$ be the centered increments of unit support ($\sup \xi_i - \inf \xi_i = 1$). Unrolling~\eqref{eq:cusum} gives $S_t = \max\bigl(s_0 + \sum_{i \le t}(\xi_i + \mu),\ \max_{k < t}\sum_{k < i \le t}(\xi_i + \mu)\bigr)$, so $\{S^* \ge \lambda\} \subseteq \bigcup_{\tau^* \le k < \tau^* + W} \bigl\{\max_{k < j \le \tau^*+W}\sum_{k < i \le j}\xi_i \ge \lambda - s_0 - \mu W\bigr\}$. Doob's maximal inequality applied to the exponential submartingale with Azuma--Hoeffding bounds for unit-support martingale increments~\cite{hoeffding_1963,tartakovsky_2014} yields~\eqref{eq:markov_bound}; solving for $\lambda$ yields~\eqref{eq:starve_boundary}.
\end{proof}

\begin{remark}[Finite horizon and post-transient dynamics]\label{rem:sufficient}
  Eq.~\eqref{eq:markov_bound} bounds the boundary-crossing probability during the active transient window only; it does not imply permanent immunity against threshold crossings. Once adaptation restores the error rate to baseline ($t > \tau^* + W$), the drift becomes negative ($-\delta_P < 0$). However, the zero-reflected CUSUM statistic $S_t$ remains positive recurrent and will eventually cross any finite threshold $\lambda$. By classical CUSUM renewal theory~\cite{tartakovsky_2014,page_1954}, the expected time to such an event scales exponentially with the threshold ($\mathbb{E}[\tau_{\mathrm{FA}}] = \Theta(e^{c \lambda})$). Any alarm raised after $\tau^* + W$ is entirely uninformative regarding the change-point, representing one of the two operational mechanisms driving false-alarm flooding (Remark~\ref{rem:flooding}). Furthermore, the $\mathcal{O}(\sqrt{W})$ diffusion term in~\eqref{eq:starve_boundary} explains why the empirical crossover in Section~\ref{sec:crossover} manifests as a progressive stochastic transition band (Fig.~\ref{fig:crossover_and_solution}, center) rather than a discontinuous step function, alongside the bimodal $\tau_{\mathrm{ARF}}$ distribution in the Safe Zone and low-magnitude noise swaps (Remark~\ref{rem:bgswap}).
\end{remark}

\begin{remark}[Scope across detector families]\label{rem:agnostic}
  Proposition~\ref{prop:starvation} applies to cumulative-evidence monitors, including the CUSUM/PHT family and statistical process control charts such as DDM~\cite{gama_ddm_2004}, EDDM~\cite{baena_eddm_2006}, and ECDD~\cite{ross_ecdd_2012}. These detectors require persistent deviations and fail when model adaptation truncates the error signal. In contrast, distributional monitors such as KSWIN~\cite{raab_kswin_2020} evaluate a two-sample Kolmogorov--Smirnov test between consecutive sub-windows rather than integrating residuals. However, windowed tests exhibit different limitations: they lose statistical power when transient length $W$ is small relative to window size. Furthermore, on streams with non-zero baseline error ($p_0 > 0$, as in the Bernoulli streams of Section~\ref{sec:proteus}), the continuous approximations required for binary streams yield spurious alerts prior to drift onset.
\end{remark}

\textbf{Empirical validation and late alarms.}
With the instrumented ARF ($M = 10$, $c_{\mathrm{int}} = 1$) and an external StrictCUSUM (the fixed-$p_0$ CUSUM of Eq.~\eqref{eq:cusum}, $\lambda = 50$, $\delta_P = 0.01$), the external monitor detects $0\%$ of the drifts across 100 seeds for $\Delta e > 0.24$ (Figure~\ref{fig:pht_scenarios}(A)): total starvation.
Recording both stopping times $(\tau_{\mathrm{ARF}}, \tau_{\mathrm{det}})$ shows a second failure mode at standard thresholds (e.g., $\lambda = 25$): the CUSUM still alarms in more than $95\%$ of runs, but \emph{after} the first internal swap (preemption, $\tau_{\mathrm{ARF}} < \tau_{\mathrm{det}}$, in $88\%$ of runs). We call this a \textbf{late alarm}: it reports a genuine drift, but only after the ensemble has committed its first structural adaptation ($\tau_{\mathrm{ARF}} < \tau_{\mathrm{det}}$) and partially absorbed the error transient. In Figure~\ref{fig:race_condition} (specifically panel $\lambda = 25.0$), this regime appears as the dense cluster of detections occurring below the diagonal identity line.

\begin{figure*}[t]
  \centering
  \includegraphics[width=0.98\textwidth]{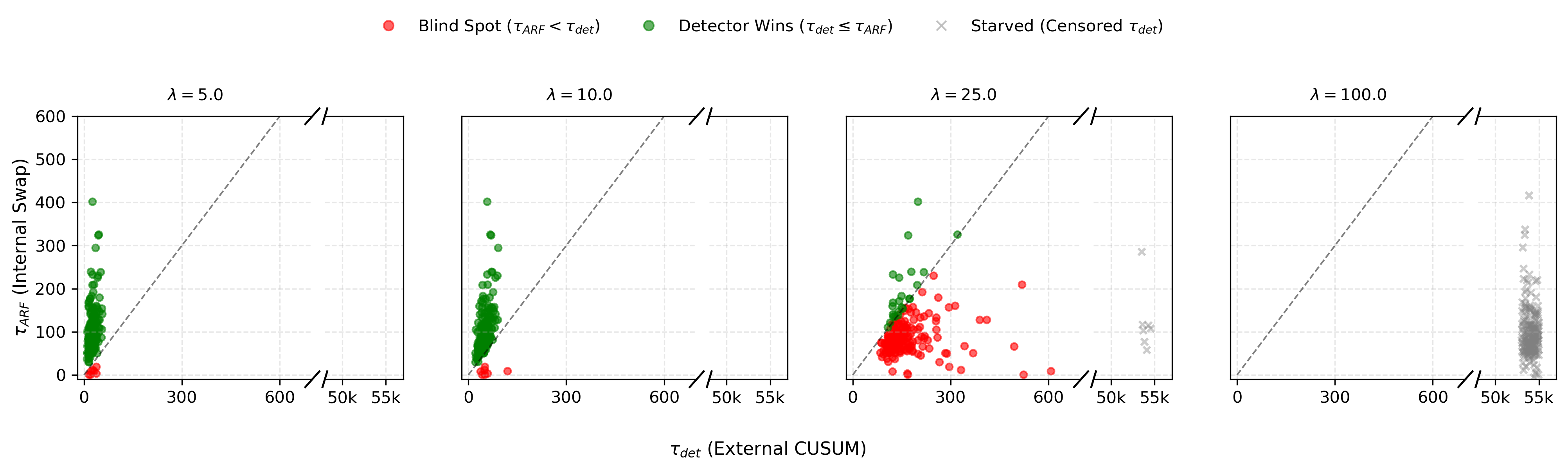}
  \caption{Race between the internal ($\tau_{\mathrm{ARF}}$) and external ($\tau_{\mathrm{det}}$) stopping times at $\Delta e{=}0.25$ ($200$ seeds/panel); starved runs ($\tau_{\mathrm{det}}$ censored at horizon $50{,}000$) sit in the broken right-hand zone. The transition $\lambda^*{\approx}15$--$25$ separates the monitor-wins, preemption (late-alarm) and total-starvation regimes.}
  \label{fig:race_condition}
\end{figure*}

\subsection{The Clock-Mismatch Artefact}\label{sec:hardware}

We isolate the impact of the evaluation clock on the real ARF across three configurations ($100$ seeds $\times$ $20$ magnitudes). Under \emph{mismatched} clocks where the classifier evaluates adaptation 32 times more frequently than the external monitor ($c_{\mathrm{int}}{=}1$, $c_{\mathrm{ext}}{=}32$---River's default), the ARF, accelerated by its parallel ensemble stopping dynamic, preempts the batched external ADWIN: miss rates exceed $50\%$ for $\Delta e > 0.35$ and \emph{increase} with magnitude, the counter-intuitive artefact signature. This is an artefact of clock disparity rather than an intrinsic failure of ADWIN. Under \emph{matched} clocks ($c_{\mathrm{int}}{=}c_{\mathrm{ext}}{=}1$), the external ADWIN reliably detects before adaptation ($\tau_{\mathrm{det}} \le \tau_{\mathrm{ARF}}$) for strong drifts ($\Delta e > 0.30$), keeping missed detections below $5\%$. (The residual $11\%$ miss rate observed at $\Delta e \approx 0.19$ reflects the statistical detection boundary of the weak-signal regime rather than preemption.) This advantage arises because the external monitor observes the aggregated \emph{ensemble} error averaged over $M{=}10$ trees, yielding a higher signal-to-noise ratio than any single tree, whereas a single-tree HAT baseline suffers ${\approx}40\%$ missed detections under the same matched clock. The decoupled configuration ($c_{\mathrm{int}}{=}32$, $c_{\mathrm{ext}}{=}1$) confirms detection before adaptation for $\Delta e > 0.30$, consistent with the \hyperref[def:decoupling]{Decoupling Principle}.

\subsection{A Cumulative Monitor at Three Thresholds}\label{sec:complexity}

Figure~\ref{fig:pht_scenarios} illustrates the race between the instrumented ARF ($M = 10$, $c_{\mathrm{int}} = 1$) and an external StrictCUSUM across three control thresholds spanning representative operational profiles ($\Delta e \in [0.02, 0.50]$).

At $\lambda = 50$ (Fig.~\ref{fig:pht_scenarios}A, conservative profile), the monitor detects in $<$1\% of runs across all magnitudes: accelerated adaptation purges the error transient before the cumulative sum can reach threshold (total starvation). At $\lambda = 25$ (Fig.~\ref{fig:pht_scenarios}B), detection exhibits a U-shaped failure profile: misses are total at low magnitude ($\Delta e < 0.11$, weak signal), fall to $8\%$ near $\Delta e = 0.33$ while most alarms still follow the first swap (bars), and climb to $96\%$ at $\Delta e = 0.50$ as adaptation accelerates, the signature of the blind spot paradox. At $\lambda = 8$ (Fig.~\ref{fig:pht_scenarios}C, hyper-reactive profile), the monitor detects reliably for $\Delta e > 0.15$, but this sensitivity incurs prohibitive false-alarm rates as soon as baseline noise is introduced.

\begin{figure*}[t]
  \centering
  \includegraphics[width=0.31\textwidth]{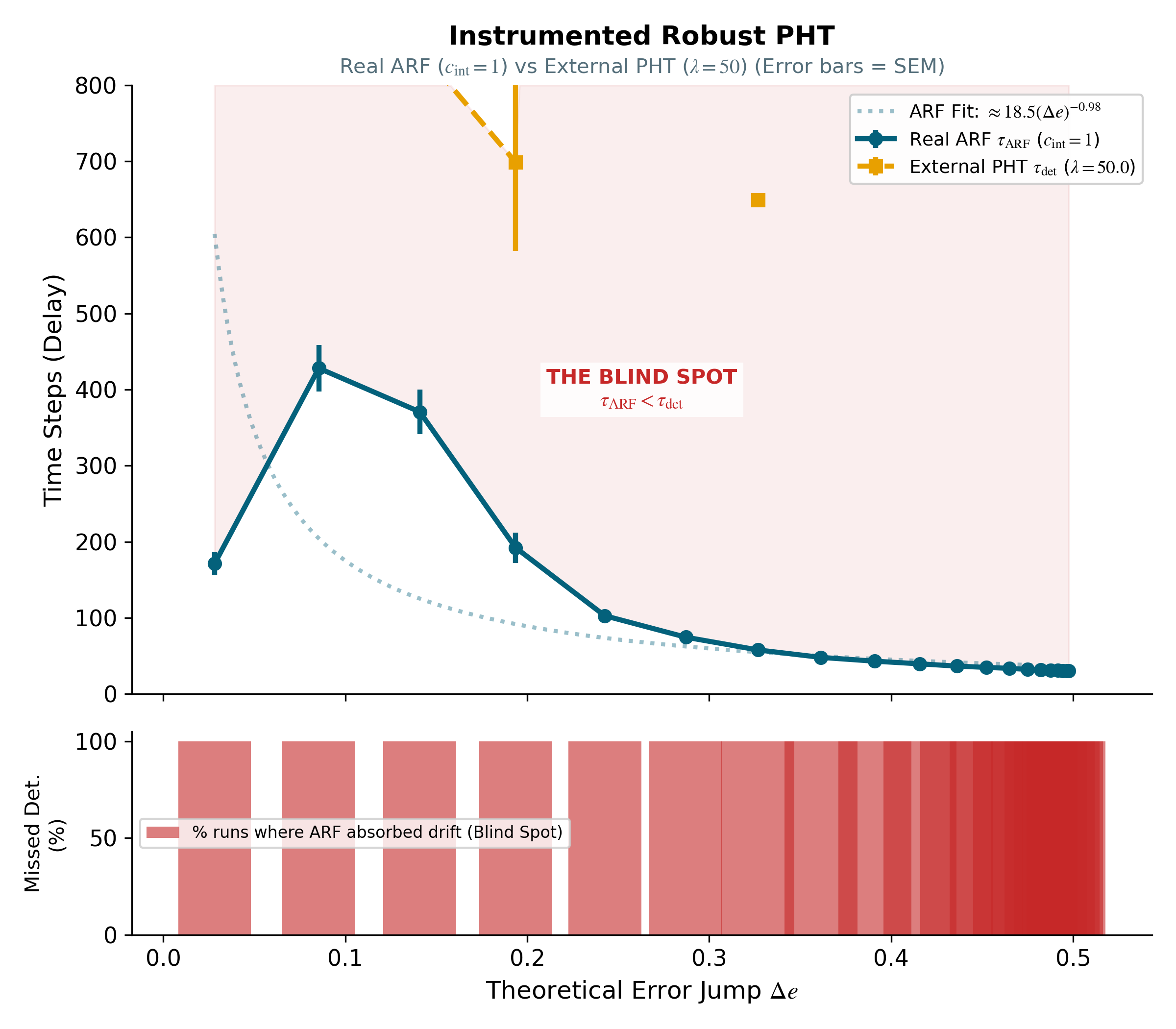}
  \includegraphics[width=0.31\textwidth]{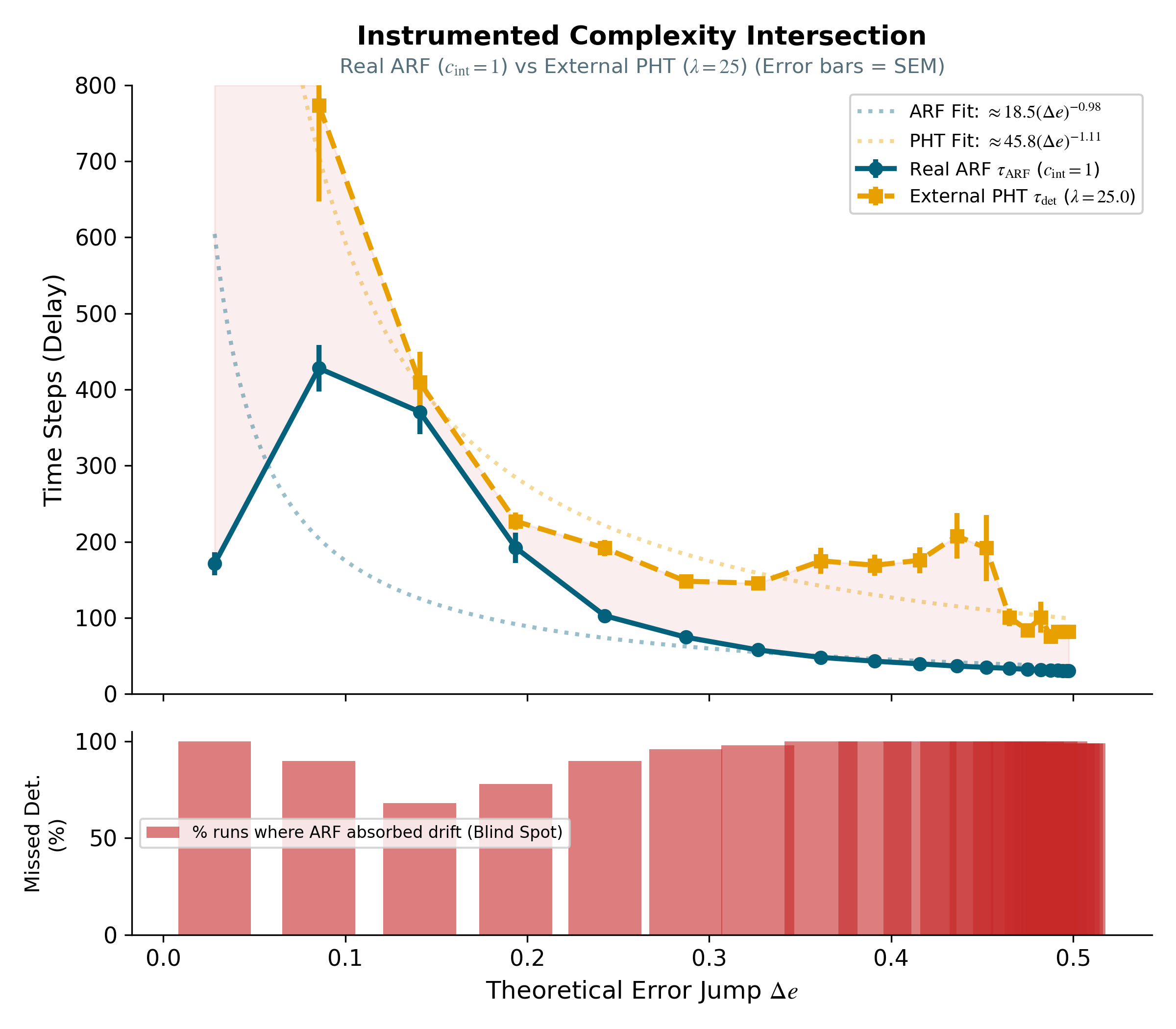}
  \includegraphics[width=0.31\textwidth]{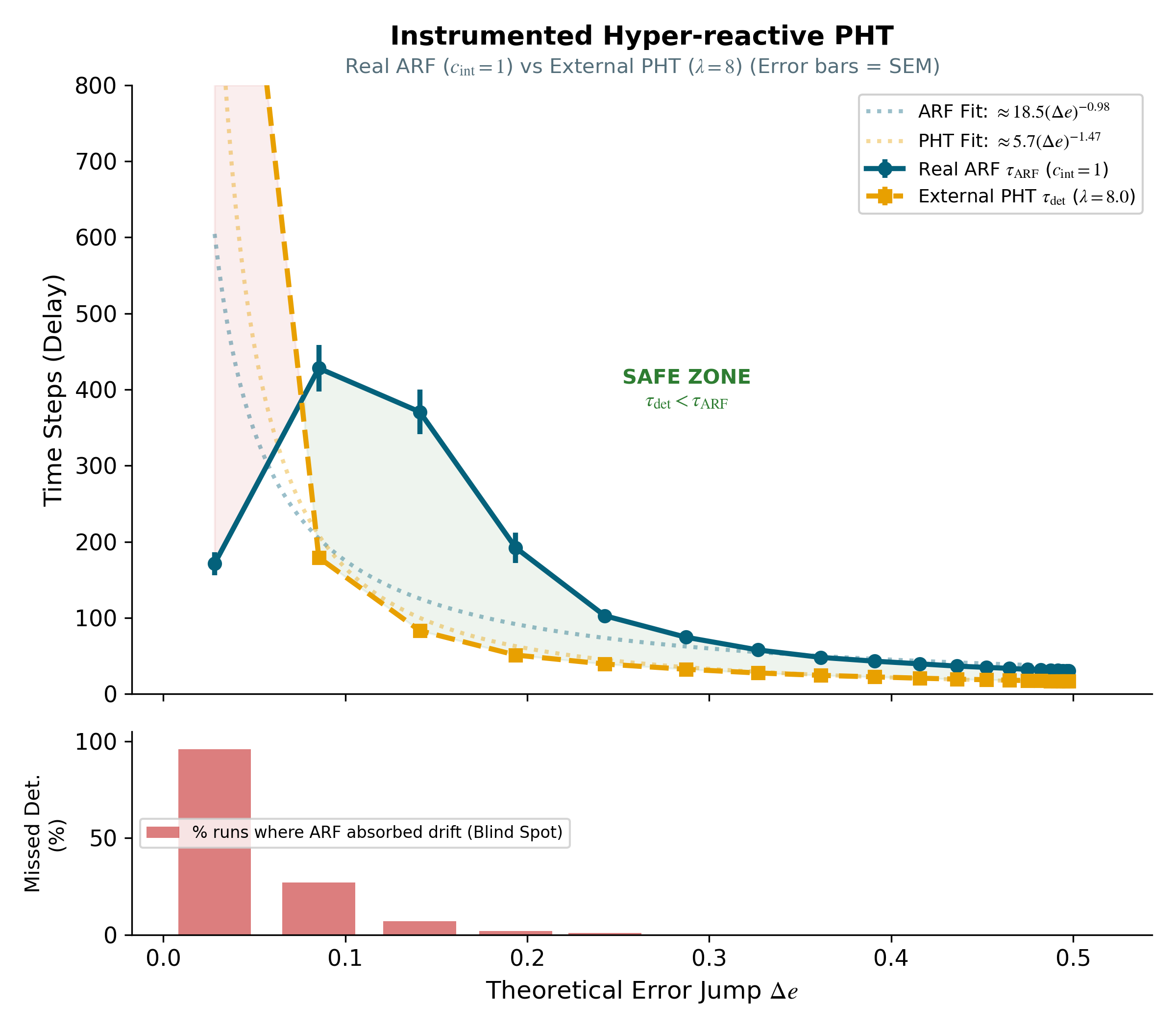}
  \caption{Instrumented race, ARF ($M{=}10$) vs.\ external StrictCUSUM. (A)~$\lambda{=}50$: $100\%$ miss. (B)~$\lambda{=}25$: paradoxical miss increase with $\Delta e$ (bars: runs where ARF absorbed drift before alarm). (C)~$\lambda{=}8$: reliable detection at $\Delta e{>}0.15$.}
  \label{fig:pht_scenarios}
\end{figure*}

\begin{remark}[Background swap rate]\label{rem:bgswap}
  At very low drift magnitudes ($\Delta e < 0.10$), $\tau_{\mathrm{ARF}}$ exhibits non-monotonic behavior: the ARF occasionally swaps trees even under stationary conditions due to the high evaluation frequency of internal ADWIN detectors at $c_{\mathrm{int}} = 1$.
  These ``noise swaps'' saturate delay at the ensemble in-control run length ($\mathrm{ARL}_0 / M \approx 170$), causing the downward hook observed at $\Delta e < 0.10$ in Figure~\ref{fig:pht_scenarios} instead of following the asymptotic power-law fit. Each noise swap introduces an immature background tree, generating a localized error burst that can trigger spurious external drift alarms (Remark~\ref{rem:flooding}).
\end{remark}

\begin{remark}[False-alarm flooding as the dual regime of starvation]\label{rem:flooding}
  Adaptive monitoring fails through two distinct regimes: starvation (loss of recall) and flooding (loss of precision). Flooding stems from two independent mechanisms: post-transient diffusion after a true drift (Remark~\ref{rem:sufficient}), and localized error bursts from premature background swaps on noisy baselines ($p_0 > 0$, Remark~\ref{rem:bgswap}). On clean stationary streams with internal detectors at $c_{\mathrm{int}}=1$, surviving trees absorb error spikes via incremental leaf updates (Section~\ref{sec:race}), cutting off the cumulative monitor's signal ($\mathrm{recall} \to 0$, Proposition~\ref{prop:starvation}). Conversely, under low signal-to-noise ratios or gradual transitions, sample variance triggers unjustified tree replacements. Each premature substitution introduces an under-trained background tree, producing a transient surge in classification errors that the external monitor mistakes for environmental drift, collapsing precision ($\mathrm{precision} \to 0$). On the \emph{gradual\_balanced} benchmark with $p_0 > 0$, pairing a calibrated external PHT with an ARF at $c_{\mathrm{int}}=1$ inflates external alarms to $86$ for a single ground-truth transition, compared to only $7$ for the static HT baseline (Table~\ref{tab:real_data_summary}). Admissible thresholds are therefore bounded on both sides: from below by the false-alarm budget set by stream noise, and from above by the transient evidence ceiling.
\end{remark}

\subsection{A Distribution-Free Race Bound}\label{sec:starvation_boundary}

\noindent\textbf{Tree dependence.}
The $M$ trees in the ARF train on the \emph{same} stream via Poisson bootstrap (Oza bagging with $\lambda = 6$~\cite{gomes_arf_2017}), rendering delays $\tau_1, \ldots, \tau_M$ positively dependent. Conditioning on the stream realisation would restore independence if the per-tree randomness were private, and a Jensen argument on $x \mapsto x^M$ would then bound the minimum. Yet, it is not private: River draws the Poisson weights of every member of the forest from one shared generator, so conditional independence fails at the implementation. Consequently, the race bound must hold under arbitrary dependence across trees:

\begin{proposition}[Distribution-free race bound]\label{prop:starvation_boundary}
  Let $\tau_1, \ldots, \tau_M$ be the internal detection delays of the $M$ trees, $F$ the marginal CDF of one of them, and $\tau_{\mathrm{ARF}} = \min_i \tau_i$. For every horizon $s$ and whatever the dependence between the $\tau_i$,
  \begin{equation}\label{eq:boole_bound}
    F(s) \;\le\; \mathbb{P}\bigl(\tau_{\mathrm{ARF}} \le s\bigr) \;\le\; \min\bigl(1,\, M\,F(s)\bigr)\,.
  \end{equation}
\end{proposition}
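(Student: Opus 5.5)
The plan is to work with events rather than distributions, so that no independence assumption is ever needed. Fix a horizon $s$ and set $A_i := \{\tau_i \le s\}$. Because $\tau_{\mathrm{ARF}} = \min_i \tau_i$ as in Eq.~\eqref{eq:hydra}, the minimum is at most $s$ exactly when at least one tree has swapped by time $s$. This gives the set identity
\[
\{\tau_{\mathrm{ARF}} \le s\} = \bigcup_{i=1}^{M} A_i .
\]
It holds pointwise on the sample space, whatever the joint law of $(\tau_1,\dots,\tau_M)$. The identity also uses that the $\tau_i$ take values in $[0,\infty]$; right-censored delays such as those in Section~\ref{sec:hydra} are set to $+\infty$.

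\textbf{Lower bound.} I would argue by monotonicity of probability. Pick any index $j$, for instance the tree whose marginal is $F$. Since $A_j \subseteq \bigcup_i A_i$, we get $\mathbb{P}(\tau_{\mathrm{ARF}} \le s) \ge \mathbb{P}(A_j) = F(s)$. If the marginals differ across trees, the same step gives the stronger bound $\max_i F_i(s)$.

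\textbf{Upper bound.} I would apply Boole's inequality (the union bound):
\[
\mathbb{P}\Bigl(\bigcup_i A_i\Bigr) \le \sum_{i=1}^{M} \mathbb{P}(A_i) = M\,F(s).
\]
The last equality uses the assumption that the trees share the common marginal $F$. This is the point to check against the setting, since the trees are exchangeable under Oza bagging only up to initialisation. If they are not, the bound becomes $\sum_i F_i(s)$, and the stated form holds with $F$ read as the largest marginal. The trivial bound $\mathbb{P}(\cdot) \le 1$ then gives the $\min(1,\cdot)$.

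\textbf{Tightness.} No step is technically hard, but I would remark that both ends are attained, which justifies the claim "whatever the dependence." Comonotone delays ($\tau_1 = \dots = \tau_M$) attain the lower bound. The upper bound is attained when the events $A_i$ are disjoint and $M F(s) \le 1$. This shows that the shared Poisson generator can degrade the speedup of Section~\ref{sec:hydra} to anywhere between $1\times$ and $M\times$.
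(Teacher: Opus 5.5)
Your proof is correct and follows the paper's argument: the set identity $\{\tau_{\mathrm{ARF}}\le s\}=\bigcup_i\{\tau_i\le s\}$, monotonicity of measure for the lower bound, and Boole's inequality with the trivial cap by $1$ for the upper bound; your remark that the upper bound needs a common marginal (or $F$ read as the largest one) is a fair sharpening of the paper's ``marginal CDF of one of them''. One caution on your closing remark only: the proposition confines the fixed-horizon ratio $\mathbb{P}(\tau_{\mathrm{ARF}}\le s)/F(s)$ to $[1,M]$, but it does not cap the mean-delay speedup of Section~\ref{sec:hydra} at $M\times$. For example, countermonotone unit-mean exponential delays with $M=2$ give $\mathbb{E}[\tau_{\mathrm{ARF}}]=1-\ln 2\approx 0.31\,\mathbb{E}[\tau_1]$, which is a speedup above $3\times$.
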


\begin{proof}
  The lower bound is monotonicity of measure applied to one member. The upper bound is Boole's inequality on the union $\bigcup_i \{\tau_i \le s\}$.
\end{proof}

At operating points reported here, $M F(s) > 1$ and~\eqref{eq:boole_bound} saturates, bounding miss probability trivially by one. We therefore report empirical detection rates. With the instrumented ARF at $\lambda = 50$, the monitor misses $>99\%$ of drifts at $\Delta e = 0.33$ (Fig.~\ref{fig:pht_scenarios}A). At $\lambda = 25$, the detection rate exhibits an inverted U-shape ($0.00$ below $\Delta e = 0.11$, $0.92$ near $\Delta e = 0.33$, $0.04$ at $\Delta e = 0.50$), matching the U-shaped miss profile (Section~\ref{sec:complexity}) and evidence ceiling (Section~\ref{sec:crossover}). This non-monotonicity contradicts canonical accumulation models where delay scales strictly as $\mathcal{O}(1/\Delta e)$.

\subsection{The Decoupling Principle}\label{sec:decoupling}

\begin{definition}[Decoupling Principle]\label{def:decoupling}

  \mbox{}\par
  \smallskip
  \emph{(i)~Sufficient condition.} A monitoring pipeline satisfies the \textbf{Decoupling Principle} when the external monitor is configured to be \emph{strictly more reactive} than the classifier's internal adaptation, in the unit of its own family (a clock or a threshold):
  \begin{equation}\label{eq:decoupling}
    \begin{cases}
      c_{\mathrm{ext}} < c_{\mathrm{int}}                                                 & \text{(clocked)},    \\
      \lambda < \min_{\Delta e \in [\Delta e_{\min}, \Delta e_{\max}]} S_{\max}(\Delta e) & \text{(cumulative)},
    \end{cases}
  \end{equation}
  where $S_{\max}(\Delta e)$ is the peak of the evidence $S_t$ over the post-change horizon for a drift of magnitude $\Delta e$. The evidence peaks when adaptation completes, so a threshold below the peak is crossed before the evidence is erased; the minimum runs over the range because the peak is not monotone in $\Delta e$ (Section~\ref{sec:crossover}). The clock condition removes the phase delay of Eq.~\eqref{eq:clock}, and in Section~\ref{sec:hardware} it suffices for $\Delta e > 0.30$.

  \smallskip
  \emph{(ii)~Operational calibration.} The cumulative condition in~\eqref{eq:decoupling} is calibrated over $[\Delta e_{\min}, \Delta e_{\max}]$ at reliability $1-\alpha$ via four offline steps:
  \begin{enumerate}[nosep, leftmargin=*, label=\textbf{(\arabic*)}]
    \item \textbf{Target range.} Fix the operational interval $[\Delta e_{\min}, \Delta e_{\max}]$;
    \item \textbf{Peak recording.} For each grid point $\Delta e$, record peak evidence $S_{\max}$ under synthetic abrupt drift;
    \item \textbf{Quantile extraction.} Extract empirical quantile $q_{\alpha}(S_{\max}(\Delta e))$;
    \item \textbf{Admissibility check.} Set $\lambda_{\mathrm{op}} := \min_{\Delta e} q_{\alpha}(S_{\max}(\Delta e))$, admissible if $\lambda_{\mathrm{op}} \ge \lambda_{\mathrm{FA}}$ (the false-alarm budget).
  \end{enumerate}
  \smallskip
  \noindent Calibrated on synthetic Gaussian streams (Section~\ref{sec:race}; StrictCUSUM, $\delta_P = 0.01$, $\alpha = 0.05$), the range is bounded on both sides. Below floor $\Delta e_c = 0.120$ $[0.114, 0.127]$, transient evidence cannot cross any threshold meeting the false-alarm budget, leaving an empty admissible set. On $[0.20, 0.40]$, calibration yields $\lambda_{\mathrm{op}} = 21.93$ $[19.88, 22.40]$. On $[0.20, 0.50]$, accelerated adaptation at $\Delta e = 0.50$ depresses peak evidence below $\lambda_{\mathrm{FA}}$ in $62.6\%$ of $10{,}000$ bootstrap replicates, leaving the calibration unidentified and capping the range at $\Delta e_{\max} = 0.40$. The method requires only offline drift simulation, without an analytical transient model.
\end{definition}

For cumulative-evidence monitors, a \textbf{non-adaptive classifier} (e.g., Hoeffding Tree) has no internal drift mechanism, so \textbf{no race condition is involved}.

\section{Empirical Validation}\label{sec:experiments}

All experiments ran on an AMD EPYC 8224P (24 cores, 48 threads) with 192\,GB RAM in ${\approx}5$\,h total.

\subsection{Beyond I.I.D. Streams: Heteroscedastic ARMA--GARCH Benchmarks}\label{sec:proteus}

\noindent\textbf{Experimental setup.}
All experiments use the ProteuS generator~\cite{suarez_cetrulo_proteus_2025}, simulating GARCH(1,1) streams with regime-switching drift across four reference ETFs (SPY, PFF, VNQ, BWX).
We evaluate 12 macro-transitions (widths $w \in \{100, 500, 1000\}$, logistic parameter interpolation, $n = 8000$, $\tau^* = 4000$) across three regimes: IID ($\alpha + \beta = 0$), Cal.~A ($\alpha + \beta \approx 0.95$), and Cal.~B ($\alpha + \beta \approx 0.99$) with Student-$t_7$ innovations.
Each setting is tested over \textbf{30 random seeds} (360 streams per table cell).
Features are log-return $r_t$ and 20-period rolling volatility.
Detection uses a tolerance window ($\tau = \max(1000, w)$, first-match unilateral matching; PHT threshold $\lambda = 15$).

\noindent\textbf{Results.}
Table~\ref{tab:results_concept} reports 14 pipeline configurations, including the static-RF rows of Section~\ref{sec:solution_rf}. All significance is assessed at the seed level (the unit of statistical independence) to avoid pseudo-replication~\cite{demsar_statistical_2006}; sign tests confirm complete separation on all starved pipelines ($30/30$ seeds, $p \approx 2 \times 10^{-9}$), with $0/1080$ detections for PHT+ARF (vs.\ $932/1080$ for PHT+HT) and EDDM+ARF (vs.\ $882/1080$ for EDDM+HT).

\begin{table*}[t]
  \centering
  \caption{Concept drift detection on heteroscedastic ProteuS streams (12 transitions $\times$ 30 seeds; 360 runs per cell). Mean $\pm$ 95\% bootstrap confidence intervals (1,000 resamples); $\infty$: no detection within the tolerance horizon; $[X]$: delay-corrected ADD (KSWIN intrinsic window delay $W/2 = 15$). Boldface marks the blind-spot collapse ($\mathrm{F1}=0.00$) and full resolution ($\mathrm{F1}=1.00$). Complete separations at the seed level hold for every starved pipeline against its baseline or windowed resolution (paired sign tests, 30/30 seeds favouring the non-starved arm, $p \approx 2\times10^{-9}$; see text for event counts).}
  \label{tab:results_concept}
  \begin{tabular}{@{}lrrrrrr@{}}
    \toprule
                                                   & \multicolumn{2}{c}{\textbf{IID Baseline}} & \multicolumn{2}{c}{\textbf{Cal. A (Low GARCH)}} & \multicolumn{2}{c}{\textbf{Cal. B (High GARCH)}}                                                                     \\
    \cmidrule(lr){2-3} \cmidrule(lr){4-5} \cmidrule(lr){6-7}
    \textbf{Pipeline Configuration}                & \textbf{F1}                               & \textbf{ADD}                                    & \textbf{F1}                                      & \textbf{ADD}       & \textbf{F1}             & \textbf{ADD}       \\
    \midrule
    \textbf{PHT + HT}                              & 0.75 {\small $\pm$0.04}                   & 23 {\small $\pm$3}                              & 0.86 {\small $\pm$0.04}                          & 22 {\small $\pm$2} & 0.98 {\small $\pm$0.02} & 23 {\small $\pm$3} \\
    \textbf{PHT + ARF} ($c_{\mathrm{int}} = 32$)   & 0.68 {\small $\pm$0.05}                   & 19 {\small $\pm$1}                              & 0.63 {\small $\pm$0.05}                          & 20 {\small $\pm$1} & 0.56 {\small $\pm$0.05} & 19 {\small $\pm$1} \\
    \textbf{PHT + ARF} ($c_{\mathrm{int}} = 1$)    & \textbf{0.00}                             & $\infty$                                        & \textbf{0.00}                                    & $\infty$           & \textbf{0.00}           & $\infty$           \\
    \midrule
    \textbf{EDDM + HT}                             & 0.68 {\small $\pm$0.05}                   & 37 {\small $\pm$4}                              & 0.83 {\small $\pm$0.04}                          & 39 {\small $\pm$3} & 0.94 {\small $\pm$0.03} & 40 {\small $\pm$4} \\
    \textbf{EDDM + ARF} ($c_{\mathrm{int}} = 1$)   & \textbf{0.00}                             & $\infty$                                        & \textbf{0.00}                                    & $\infty$           & \textbf{0.00}           & $\infty$           \\
    \midrule
    \textbf{ADWIN + HT}                            & 0.80 {\small $\pm$0.04}                   & 18 {\small $\pm$4}                              & 0.89 {\small $\pm$0.03}                          & 14 {\small $\pm$2} & 0.99 {\small $\pm$0.01} & 13 {\small $\pm$2} \\
    \textbf{ADWIN + ARF} ($c_{\mathrm{int}} = 32$) & 1.00                                      & 31                                              & 1.00                                             & 31                 & 1.00                    & 31                 \\
    \textbf{ADWIN + ARF} ($c_{\mathrm{int}} = 1$)  & 0.96 {\small $\pm$0.02}                   & 9                                               & 0.99 {\small $\pm$0.01}                          & 9                  & 1.00                    & 9                  \\
    \midrule
    \textbf{SRP + PHT} ($c_{\mathrm{int}} = 1$)    & \textbf{0.00}                             & $\infty$                                        & \textbf{0.00}                                    & $\infty$           & \textbf{0.00}           & $\infty$           \\
    \textbf{SRP + PHT} ($c_{\mathrm{int}} = 32$)   & 0.10 {\small $\pm$0.03}                   & 17 {\small $\pm$1}                              & 0.09 {\small $\pm$0.03}                          & 18 {\small $\pm$1} & 0.07 {\small $\pm$0.03} & 21 {\small $\pm$2} \\
    \textbf{KSWIN + ARF} ($c_{\mathrm{int}} = 1$)  & \textbf{1.00}                             & 14 [0]                                          & \textbf{1.00}                                    & 14 [0]             & \textbf{1.00}           & 14 [0]             \\
    \textbf{KSWIN + ARF} ($c_{\mathrm{int}} = 32$) & \textbf{1.00}                             & 14 [0]                                          & \textbf{1.00}                                    & 14 [0]             & \textbf{1.00}           & 14 [0]             \\
    \midrule
    \textbf{PHT + RF} (Static)                     & 0.73 {\small $\pm$0.05}                   & 19 {\small $\pm$1}                              & 0.84 {\small $\pm$0.04}                          & 20 {\small $\pm$2} & 0.96 {\small $\pm$0.02} & 20 {\small $\pm$2} \\
    \textbf{ADWIN + RF} (Static)                   & 0.79 {\small $\pm$0.04}                   & 14 {\small $\pm$2}                              & 0.89 {\small $\pm$0.03}                          & 13 {\small $\pm$2} & 0.99 {\small $\pm$0.01} & 12 {\small $\pm$1} \\
    \bottomrule
  \end{tabular}
\end{table*}

The results in Table~\ref{tab:results_concept} instantiate the theoretical mechanisms of Section~\ref{sec:blindspot}. Monitoring ARF under continuous checks ($c_{\mathrm{int}}=1$) collapses detection across all regimes ($\mathrm{ADD} = \infty$, $\mathrm{F1} = 0.00$), validating Proposition~\ref{prop:starvation} under ensemble acceleration (Section~\ref{sec:hydra}). Setting $c_{\mathrm{int}}=32$ delays tree replacement, letting PHT accumulate evidence and recovering F1 to $0.56$--$0.68$ ($\mathrm{ADD} \approx 19$). This corroborates the Decoupling Principle (Definition~\ref{def:decoupling}).

Starvation requires both an adaptive ensemble and a cumulative monitor. Monitoring ARF with ADWIN preserves high performance ($\mathrm{F1} \ge 0.96$ at $c=1$) because window-divergence monitors track sub-window means rather than accumulating residuals (Remark~\ref{rem:agnostic}). Conversely, pairing PHT with a non-adaptive Hoeffding Tree eliminates starvation ($\mathrm{F1} \ge 0.75$). EDDM exhibits the same collapse under $c_{\mathrm{int}}=1$ ($\mathrm{F1} = 0.00$, $0/1080$ detections vs $882/1080$ for EDDM+HT), confirming that Proposition~\ref{prop:starvation} extends beyond CUSUM.\footnote{We report EDDM rather than DDM because default thresholds for DDM generate high false-alarm rates across both ARF and HT on these streams, preventing an isolated recall comparison.} Evaluating Streaming Random Patches (SRP)~\cite{gomes_srp_2019} with PHT shows the same starvation ($0/1080$ detections vs $932/1080$ for HT): the blind spot is not specific to ARF. The static RF (Section~\ref{sec:solution_rf}) resolves it ($\mathrm{F1} \ge 0.73$).

\textbf{KSWIN+ARF on ProteuS, and its limits.}
KSWIN~\cite{raab_kswin_2020} detects all ProteuS drifts ($\mathrm{F1}=1.00$, $\mathrm{ADD}=14$). Yet two caveats qualify this result. First, false-alarm budgets differ: matching pre-change alarm rates in Table~\ref{tab:results_concept} requires configuring ADWIN $45\times$ more conservatively and KSWIN $4.5\times$ more permissively than CUSUM. Second, ProteuS has zero pre-change error by construction, keeping KSWIN's sliding window invariant before drift. On the Bernoulli streams ($p_0 > 0$), this configuration triggers 8--9 false alarms per thousand pre-change steps across all $\alpha$, and its detection rate ($0.713$) falls below CUSUM ($0.928$) and ADWIN ($0.885$). This detector-side remedy thus fails once $p_0 > 0$.

\subsection{Bridging the Gap: The Regime Crossover}\label{sec:crossover}

The abrupt transitions of SEA~\cite{street_kim_2001} induce limited error shocks: with two informative features uniform on $[0, 10]$, one noise feature, and label thresholds $\theta \in \{8, 9, 7, 9.5\}$, a shift flips a fraction $|\theta_b^2 - \theta_a^2|/200$ of labels, yielding $\Delta e \in \{0.085, 0.16, 0.21\}$. While SEA operates on bounded uniform noise rather than unbounded Gaussian features (Section~\ref{sec:race}), cumulative error monitors observe only the resulting binary error sequence $e_t$. Because SEA's error shocks remain bounded at $\Delta e \le 0.21$, they stay below the Blind Spot zone ($\Delta e \ge 0.25$). To explore higher magnitudes, our crossover experiment sweeps the boundary shift of Section~\ref{sec:race} across $\Delta e \in [0.02, 0.50]$ over 100 independent streams ($8{,}000$ steps, abrupt drift at $t^* = 4{,}000$). An external River PHT ($\lambda=25$, $\delta_P=0.005$) monitors the binary error stream of both a baseline Hoeffding Tree and an ARF ($c_{\mathrm{int}}=1$). Both pipelines run from stream inception, re-arm upon alarm, score pre-drift alerts as false positives, and enforce a $1{,}000$-step post-drift detection horizon.

\begin{figure*}[t]
  \centering
  \includegraphics[width=0.90\textwidth]{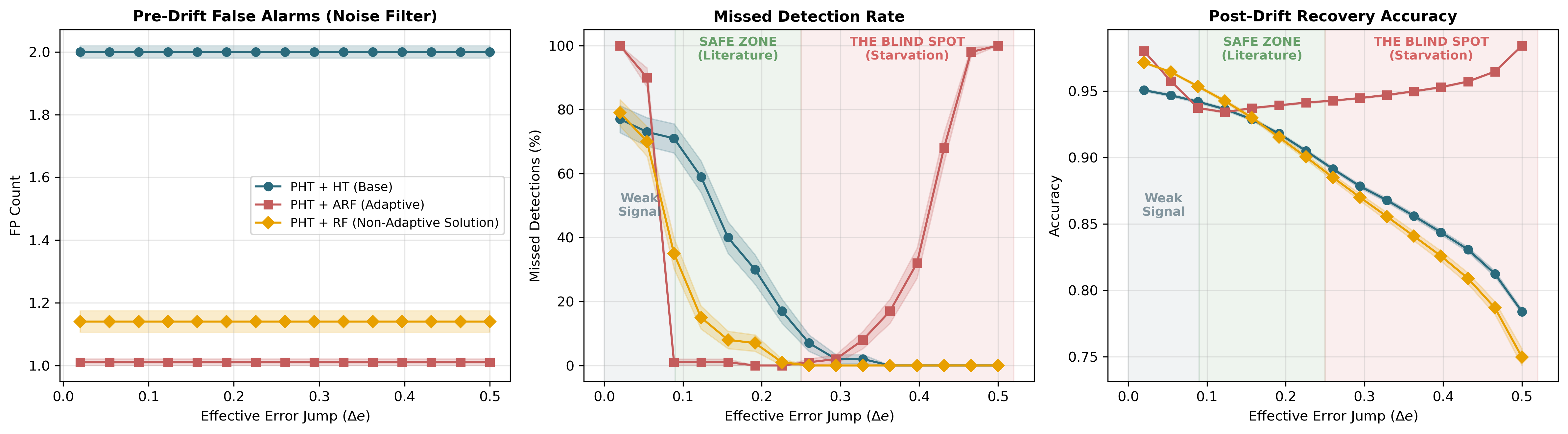}
  \caption{Regime crossover and non-adaptive resolution (100 streams/point, bands = $\pm 1$ SEM). Left: pre-drift false alarms; center: missed detections; right: post-drift accuracy. Shaded zones: Weak Signal, Safe Zone (ARF outperforms HT, the literature consensus) and Blind Spot ($\Delta e{\ge}0.25$), where the ARF's recovery starves the PHT ($100\%$ missed at $\Delta e{=}0.50$). The non-adaptive RF (yellow) inherits the ARF's variance reduction (left) and keeps the HT's persistent error signal (right), which removes starvation (center).}
  \label{fig:crossover_and_solution}
\end{figure*}

\textbf{Joint interpretation of the three crossover regimes.}
Figure~\ref{fig:crossover_and_solution} illustrates the operational trade-off across three regimes:
\begin{enumerate}[nosep, leftmargin=*, label=\textbf{(R\arabic*)}]
  \item \textbf{Weak Signal} ($\Delta e < 0.09$). Cumulative evidence fails to cross threshold within tolerance. Single HT models exhibit an $80\%$ miss rate due to sample variance: unbagged tree updates trigger localized error bursts that prematurely deplete the monitor's reference, whereas ARF bagging suppresses pre-drift false alarms by half (left panel, mirroring Section~\ref{sec:hardware}).
  \item \textbf{Safe Zone} ($\Delta e \in [0.09, 0.25]$). Typical of historical benchmarks like SEA. The ARF filters baseline noise (left panel) while allowing sufficient cumulative evidence to reach threshold, achieving $0\%$ missed detections and outperforming HT (center panel).
  \item \textbf{Blind Spot} ($\Delta e \ge 0.25$). Accelerated adaptation (Section~\ref{sec:hydra}) shortens transient length $W$. Driven by incremental learning in surviving trees ($98.6\%$ of erasure; Section~\ref{sec:race}), post-drift accuracy recovers immediately (right panel), starving the external PHT until missed detections reach $100\%$ at $\Delta e = 0.50$ (center panel). The static RF resolves this starvation ($0\%$ misses, center) by preserving persistent error, at the cost of a ${\sim}24$-point drop in recovery accuracy (right panel).
\end{enumerate}

\textbf{Mechanism of the empirical crossover.}
Classical first-passage bounds for transient CUSUM~\cite{tartakovsky_2023_transient,tartakovsky_2014} (Markov, Doob reflection, Siegmund ARL) predict a roughly \emph{uniform} missed-detection rate across $\Delta e \in [0.05, 0.50]$ under canonical assumptions ($\lambda=25$, $\delta_P=0.005$, $\tau_{\mathrm{ARF}} \approx 18.5(\Delta e)^{-0.98}$), failing to reproduce the sharp transition in Figure~\ref{fig:crossover_and_solution}. This discrepancy stems from River's running mean-tracking baseline~\cite{montiel_river_2021} (which differs from the fixed-$p_0$ CUSUM of Eq.~\eqref{eq:cusum}) interacting with low-magnitude noise swaps (Remark~\ref{rem:bgswap}). Instead, the crossover qualitatively mirrors the non-monotonic profile of the peak accumulated evidence $\mathbb{E}[S_{\max}(\Delta e)]$ introduced in Definition~\ref{def:decoupling}. This expected maximum evidence is unimodal in $\Delta e$: it rises from $5.28$ $[5.01, 5.55]$ at $\Delta e = 0.028$ to a plateau near $33$ across $\Delta e \in [0.19, 0.25]$, then declines to $19.13$ $[18.73, 19.50]$ at $\Delta e = 0.498$. Past this plateau, stronger drifts generate \emph{less} cumulative evidence because adaptation extinguishes the transient faster than evidence accumulates. The three panels of Figure~\ref{fig:pht_scenarios} materialize this single evidence curve under three horizontal threshold cuts: $\lambda = 50$ exceeds the peak everywhere (total starvation), $\lambda = 25$ intersects it (producing the U-shaped blind spot), and $\lambda = 8$ runs below it except at the weakest drifts. Consequently, missed detections escalate with drift magnitude only beyond the peak of $\mathbb{E}[S_{\max}(\Delta e)]$ ($\Delta e > 0.25$), where accelerated adaptation erases transient error faster than CUSUM accumulation.

\textbf{Real-world streams.}
We evaluate six real-world streams: the Bank Account Fraud (BAF) tabular benchmark~\cite{jesus_baf_2022} in its three temporal-split variants (Base, Variant I, Variant II), and three variants of the INSECTS benchmark~\cite{souza_insects_2020} (\emph{abrupt\_balanced}, \emph{gradual\_balanced}, \emph{incremental\_reoccurring\_balanced}). We compute F1 over 30 random seeds per variant, evaluating true detections via a bipartite Hopcroft-Karp matching protocol~\cite{hopcroft_karp_1973}. The effective error jump $\Delta e$ is estimated drift-by-drift via an adaptive window protocol $w_k = \min(5000, \min(\Delta t_{\mathrm{left}}, \Delta t_{\mathrm{right}})/3)$ to avoid the overlapping-window artefact that fixed-window estimators produce on closely-spaced engineered drifts. Table~\ref{tab:real_data_summary} places the six streams on the crossover; all INSECTS positions follow Souza et al.~\cite{souza_insects_2020}, Table~2.

The three BAF variants lie in the Weak Signal zone ($\Delta e \approx 0$), where the blind spot cannot manifest. INSECTS \emph{abrupt\_balanced} ($\Delta e \approx 0.07$) is uninformative: its five canonical transitions exhibit heterogeneous, partially negative shifts ($\Delta e \in \{0.68, -0.15, 0.24, -0.27, -0.13\}$). Because error decreases on negative shifts, neither pipeline reliably tracks the transitions; both alarm at rates matching their stationary background noise, meaning the marginal F1 difference ($0.18$ for ARF vs.\ $0.13$ for HT; Table~\ref{tab:real_data_summary}) reflects false-alarm timing variance rather than genuine detection.

The failure mode appears clearly on the two high-magnitude variants through frequent false alarms (Remark~\ref{rem:flooding}): \emph{gradual\_balanced} ($\Delta e \approx 0.45$) and \emph{incremental\_reoccurring\_balanced} ($\Delta e \approx 0.39$). On \emph{gradual\_balanced}, PHT+HT detects the transition with $7$ alarms (precision $0.14$, $\mathrm{F1} = 0.25$). In comparison, PHT+ARF ($c_{\mathrm{int}}=1$) records $86$ alarms, reducing precision to $0.012$ and $\mathrm{F1}$ to $0.02$ (a $10.57\times$ reduction in F1). The \emph{incremental\_reoccurring\_balanced} variant shows a similar decline ($1.61\times$ ratio). Because the benchmark stream is fixed and single Hoeffding Trees are deterministic, the HT baseline yields identical outputs across runs, preventing cross-seed paired hypothesis tests. Evaluating these differences statistically will require stream-level resampling protocols, such as block bootstrapping.

These real-world results corroborate the operational duality formalized in Remark~\ref{rem:flooding}. While clean stationary streams produce a recall collapse via transient starvation (Proposition~\ref{prop:starvation}), streams with substantial baseline error ($p_0 \in [0.16, 0.26]$ on INSECTS vs.\ $p_0 = 0$ on ProteuS) trigger recurrent premature tree swaps (Remark~\ref{rem:bgswap}). Each substitution injects an untrained background tree, generating false alarms that collapse precision. Tuning the threshold cannot resolve this dilemma: any threshold low enough to capture real transients triggers massive flooding from background swaps, and even at the threshold allowing a single false alarm, precision peaks at $0.33$.\footnote{Two further INSECTS variants are excluded: \emph{incremental\_balanced} lacks discrete drift boundaries, preventing bipartite event matching, while \emph{incremental\_abrupt\_balanced} exhibits an overall decrease in classification difficulty ($\Delta e = -0.094$).}

\begin{table*}[ht]
  \centering
  \caption{Empirical evaluation across six real-world benchmark streams (30 seeds per configuration). Drift positions follow the canonical protocol of Souza et al.~\cite{souza_insects_2020} (\textit{incremental\_reoccurring\_balanced} retains $K=2$ in-stream transitions). The $c_{\mathrm{int}}=1$ evaluation mode induces precision collapse via false-alarm flooding rather than starvation (e.g., 86 alarms emitted on \textit{gradual\_balanced} for a single true drift). $^{\ast}$\textit{abrupt\_balanced} exhibits heterogeneous, partly negative local error shifts where neither detector exceeds baseline false-alarm rates.}
  \label{tab:real_data_summary}
  \small
  \setlength{\tabcolsep}{8pt}
  \begin{tabular}{@{}lccccc@{}}
    \toprule
    Benchmark Variant                         & $\Delta e$ (adaptive) & $\text{F1}_{\mathrm{PHT+HT}}$ & $\text{F1}_{\mathrm{PHT+ARF}_{c=1}}$ & $\text{F1}_{\mathrm{ADW+ARF}_{c=1}}$ & Ratio HT / ARF \\
    \midrule
    BAF Base                                  & $\approx 0$           & 0.10                          & 0.10                                 & 0.00                                 & 1.00           \\
    BAF Variant I                             & $\approx 0$           & 0.09                          & 0.11                                 & 0.00                                 & 0.83           \\
    BAF Variant II                            & $\approx 0$           & 0.00                          & 0.00                                 & 0.00                                 & --             \\
    INSECTS \textit{abrupt\_balanced}         & $0.07^{\ast}$         & 0.13                          & 0.18                                 & \textbf{0.48}                        & 0.71           \\
    INSECTS \textit{gradual\_balanced}        & \textbf{0.45}         & 0.25                          & 0.02                                 & 0.06                                 & \textbf{10.57} \\
    INSECTS \textit{incremental\_reoccurring} & 0.39                  & 0.31                          & 0.19                                 & 0.16                                 & \textbf{1.61}  \\
    \bottomrule
  \end{tabular}
\end{table*}

\subsection{A Constructive Solution: The Non-Adaptive Ensemble}\label{sec:solution_rf}

The ablations of Section~\ref{sec:race} show that blocking replacements alone lifts detection only to $49/100$: the surviving trees still absorb the transient. This motivates a \emph{Non-Adaptive Random Forest} (RF)---static Bagging of Hoeffding Trees without internal detectors---wherever cumulative monitors are vulnerable to starvation. Because drift becomes persistent ($W \to \infty$), net CUSUM increments remain positive ($\Delta e - \delta_P > 0$), guaranteeing threshold crossing almost surely.

As shown in Figure~\ref{fig:crossover_and_solution}, the static RF decouples noise filtering from drift visibility: bagging keeps most of the ARF's reduction in pre-drift false alarms (left panel), while persistent classification error ensures $0\%$ missed detections across the entire blind-spot zone ($\Delta e \ge 0.25$, center panel).

On heteroscedastic ProteuS streams (Section~\ref{sec:proteus}; Table~\ref{tab:results_concept}), substituting static RF for ARF fully restores cumulative monitoring: PHT+ARF detects $0/1080$ shifts ($\mathrm{F1}=0.00$), whereas PHT+RF recovers $\mathrm{F1}$ to $0.73$--$0.96$ ($913/1080$ detections). Evaluating paired seed aggregates ($36$ streams/seed) confirms complete statistical separation across all $30$ independent seeds ($30/30$ seed-level wins favouring RF, two-sided sign test $p = 2 \times 0.5^{30} \approx 2 \times 10^{-9}$).

This architectural fix is strictly specific to the starvation regime: when paired with a window-divergence monitor such as ADWIN (which does not starve at $c_{\mathrm{ext}}=1$), the adaptive ARF detects more drifts than the static RF ($1063/1080$ vs.\ $959/1080$, $30/30$ seeds favouring ARF, $p \approx 2 \times 10^{-9}$) while maintaining continuous online learning. Disabling adaptation carries an inescapable predictive cost: under the severe synthetic shift ($\Delta e = 0.50$), static RF accuracy decays alongside HT, falling ${\sim}24$ points below the recovered ARF ($\approx 0.745$ vs.\ $\approx 0.985$; Figure~\ref{fig:crossover_and_solution}, right).

\section{Discussion}\label{sec:discussion}

\subsection{Practical Implications}
Default configurations (e.g., River's ARF at $c_{\mathrm{int}} = 1$) mask strong drifts from cumulative monitors.

\textbf{Where the admissible set is empty.} Calibrated via Definition~\ref{def:decoupling} on Gaussian streams (StrictCUSUM, $\delta_P = 0.01$, ARF $M=10$), reliable detection over $\Delta e \in [0.20, 0.40]$ requires capping $\lambda \le \lambda_{\mathrm{op}} \approx 21.93$, satisfying false-alarm budgets on that band. However, extending this target range breaks the calibration: below $\Delta e_c = 0.120$, transient evidence is too weak to cross any threshold meeting false-alarm constraints; at $\Delta e = 0.50$, accelerated adaptation pushes peak evidence below $\lambda_{\mathrm{FA}}$ in $62.6\%$ of bootstrap replicates. This infeasibility is not a defect of cumulative detectors per se, but an intrinsic bound on the range of magnitudes a single static threshold can cover.
Three architectural options follow:
\begin{enumerate}[nosep, leftmargin=*, label=\textbf{(S\arabic*)}]
  \item \textbf{Detector-side.} Window-divergence tests (e.g., KSWIN) compare sub-window distributions rather than accumulating residuals, restoring detection on ProteuS. However, this benefit vanishes on noisy streams with non-zero baseline error ($p_0 > 0$), where window tests trigger recurrent pre-drift false alarms (Section~\ref{sec:proteus}).
  \item \textbf{Classifier-side.} A static Random Forest without internal detectors ($\tau_{\mathrm{ARF}} = \infty$) eliminates the race condition, at a post-drift accuracy cost of up to ${\sim}24$ points on complete shifts (Section~\ref{sec:solution_rf}).
  \item \textbf{Two-tier architecture (untested).} The external monitor commands tree replacements. However, commanding swaps alone only achieves $49\%$ detection ($\lambda = 50$, $\Delta e = 0.33$), because incremental learning in surviving trees drives $98.6\%$ of signal erasure (Section~\ref{sec:race}); the pipeline must also throttle or buffer incremental leaf updates until an alarm fires.
\end{enumerate}

\subsection{Limitations and Future Work}\label{sec:limitations}
\textbf{Limitations.} \emph{First}, tracking internal swaps aggregates true detections with background noise swaps, preventing isolated $\mathrm{ARL}_0$ measurement. \emph{Second}, unlike starvation (formalized via stopping-time bounds in Proposition~\ref{prop:starvation}), false-alarm flooding on noisy streams lacks an analytical boundary. \emph{Third}, real-world streaming benchmarks with clean stationary baselines are scarce: tabular streams (BAF, INSECTS) feature non-zero baseline error (INSECTS: $p_0 \in [0.16, 0.26]$), shifting the operational failure mode from recall collapse (starvation) toward precision collapse (flooding). \emph{Fourth}, PRNG sharing across trees in River violates conditional independence, so only the distribution-free race bound applies (Proposition~\ref{prop:starvation_boundary}), and it saturates. Finally, the phenomenon was evaluated on abrupt transitions; gradual and multidimensional drift generators (e.g., Hyperplane) remain to be explored.

\textbf{Future Work.} \emph{First}, evaluating other streaming ensembles (e.g., Leveraging Bagging~\cite{bifet_leveraging_2010}) to map the universality of the race condition. \emph{Second}, formalizing the two-tier architecture (S3) with update gating to preserve transients during monitoring. \emph{Third}, establishing a stream-level replication protocol (e.g., temporal block bootstrap) to enable paired hypothesis tests across fixed real-world streams (Section~\ref{sec:crossover}). \emph{Fourth}, introducing an alarm cooldown (refractory period) to suppress cascading false alarms after background tree swaps on noisy streams. \emph{Fifth}, evaluating input-distribution monitors on $P(X)$ (e.g., multivariate two-sample tests), which bypass the classifier's error stream by construction, though they cannot detect posterior shifts $P(Y|X)$ lacking covariate motion. \emph{Sixth}, investigating streaming neural networks~\cite{gama_survey_2014}, where adaptation is gradient-based rather than tree-based, to measure whether continuous backpropagation erases error transients similarly.

\section{Conclusion}\label{sec:conclusion}

This paper analyzed the \textbf{blind spot paradox}, in which an adaptive classifier absorbs an error transient before an external cumulative monitor crosses its threshold. While preemption (the first swap preceding the alarm) marks the race online, signal erasure drives the detection failure: faster adaptation destroys the monitor's input. Ablations isolate the dominant factor: surviving trees absorb $98.6\%$ of the transient through leaf updates, whereas the first tree swap accounts for $0.71\%$ of erased volume while cutting detection by $31$ percentage points. Proposition~\ref{prop:starvation} formalizes this starvation for cumulative-evidence monitors. While windowed and two-sample tests escape this specific mechanism, they encounter separate trade-offs under baseline noise (Remark~\ref{rem:agnostic}). Removing adaptation via static ensembles bypasses the race entirely, but incurs an accuracy penalty of up to ${\sim}24$ points on complete shifts.

Detectability is governed by the transient evidence accumulated against the monitor's threshold. Because this evidence peaks at intermediate drift magnitudes, stronger drifts paradoxically become harder to detect. Below magnitude floor $\Delta e_c = 0.120$, no cumulative threshold is admissible; over $\Delta e \in [0.20, 0.40]$, calibration yields $\lambda_{\mathrm{op}} = 21.93$ $[19.88, 22.40]$. Threshold placement relative to this evidence governs the failure mode: missed drifts on stationary streams (recall collapse) versus false-alarm flooding on noisy baselines (precision collapse). A two-tier architecture, in which the external monitor gates classifier adaptation, is left to future work.

\IEEEtriggeratref{17}
\bibliographystyle{IEEEtran}
\bibliography{IEEEabrv,articleA_biblio_v84}

\begin{thebibliography}{10}
\providecommand{\url}[1]{#1}
\csname url@samestyle\endcsname
\providecommand{\newblock}{\relax}
\providecommand{\bibinfo}[2]{#2}
\providecommand{\BIBentrySTDinterwordspacing}{\spaceskip=0pt\relax}
\providecommand{\BIBentryALTinterwordstretchfactor}{4}
\providecommand{\BIBentryALTinterwordspacing}{\spaceskip=\fontdimen2\font plus
\BIBentryALTinterwordstretchfactor\fontdimen3\font minus
  \fontdimen4\font\relax}
\providecommand{\BIBforeignlanguage}[2]{{%
\expandafter\ifx\csname l@#1\endcsname\relax
\typeout{** WARNING: IEEEtran.bst: No hyphenation pattern has been}%
\typeout{** loaded for the language `#1'. Using the pattern for}%
\typeout{** the default language instead.}%
\else
\language=\csname l@#1\endcsname
\fi
#2}}
\providecommand{\BIBdecl}{\relax}
\BIBdecl

\bibitem{gomes_arf_2017}
H.~M. Gomes, A.~Bifet, J.~Read, J.~P. Barddal, F.~Enembreck, B.~Pfahringer,
  G.~Holmes, and T.~Abdessalem, ``Adaptive random forests for evolving data
  stream classification,'' \emph{Machine Learning}, vol. 106, no. 9--10, pp.
  1469--1495, 2017.

\bibitem{bifet_adwin_2007}
A.~Bifet and R.~Gavald{\`a}, ``Learning from time-changing data with adaptive
  windowing,'' in \emph{Proc. 7th {SIAM} Int. Conf. Data Mining ({SDM})}, 2007,
  pp. 443--448.

\bibitem{gama_survey_2014}
J.~Gama, I.~{\v{Z}}liobait{\.{e}}, A.~Bifet, M.~Pechenizkiy, and A.~Bouchachia,
  ``A survey on concept drift adaptation,'' \emph{ACM Computing Surveys},
  vol.~46, no.~4, pp. 44:1--44:37, 2014.

\bibitem{lu_concept_drift_2018}
J.~Lu, A.~Liu, F.~Dong, F.~Gu, J.~Gama, and G.~Zhang, ``Learning under concept
  drift: {A} review,'' \emph{{IEEE} Trans. Knowl. Data Eng.}, vol.~31, no.~12,
  pp. 2346--2363, 2019.

\bibitem{klinkenberg_2000}
R.~Klinkenberg and T.~Joachims, ``Detecting concept drift with {Support Vector
  Machines},'' in \emph{Proc. 17th Int. Conf. Machine Learning ({ICML})}.\hskip
  1em plus 0.5em minus 0.4em\relax San Francisco, CA: Morgan Kaufmann, 2000,
  pp. 487--494.

\bibitem{bayram_2022}
F.~Bayram, B.~S. Ahmed, and A.~Kassler, ``From concept drift to model
  degradation: {A}n overview on performance-aware drift detectors,''
  \emph{Knowledge-Based Systems}, vol. 245, p. 108632, 2022.

\bibitem{montiel_river_2021}
\BIBentryALTinterwordspacing
J.~Montiel, M.~Halford, S.~M. Mastelini, G.~Bolmier, R.~Sourty, R.~Vaysse,
  A.~Zouitine, H.~M. Gomes, J.~Read, T.~Abdessalem, and A.~Bifet, ``{River}:
  Machine learning for streaming data in {Python},'' \emph{Journal of Machine
  Learning Research}, vol.~22, no. 110, pp. 1--8, 2021. [Online]. Available:
  \url{https://jmlr.org/papers/v22/20-1380.html}
\BIBentrySTDinterwordspacing

\bibitem{suarez_cetrulo_proteus_2025}
A.~L. Su{\'a}rez-Cetrulo, A.~Cervantes, and D.~Quintana, ``{ProteuS}: {A}
  generative approach for simulating concept drift in financial markets,''
  \emph{arXiv preprint arXiv:2509.11844}, 2025.

\bibitem{page_1954}
E.~S. Page, ``Continuous inspection schemes,'' \emph{Biometrika}, vol.~41, no.
  1--2, pp. 100--115, 1954.

\bibitem{basseville_1993}
M.~Basseville and I.~V. Nikiforov, \emph{Detection of Abrupt Changes: {Theory}
  and {Application}}.\hskip 1em plus 0.5em minus 0.4em\relax Englewood Cliffs,
  NJ: Prentice-Hall, 1993.

\bibitem{tartakovsky_2014}
A.~G. Tartakovsky, I.~V. Nikiforov, and M.~Basseville, \emph{Sequential
  Analysis: {Hypothesis} {Testing} and {Changepoint} {Detection}}.\hskip 1em
  plus 0.5em minus 0.4em\relax Boca Raton, FL: Chapman and Hall/CRC, 2014.

\bibitem{moustakides_1986}
G.~V. Moustakides, ``Optimal stopping times for detecting changes in
  distributions,'' \emph{The Annals of Statistics}, vol.~14, no.~4, pp.
  1379--1387, 1986.

\bibitem{lorden_1971}
G.~Lorden, ``Procedures for reacting to a change in distribution,'' \emph{The
  Annals of Mathematical Statistics}, vol.~42, no.~6, pp. 1897--1908, 1971.

\bibitem{goncalves_2014}
P.~M. Gon{\c{c}}alves, Jr., S.~G.~T. de~Carvalho~Santos, R.~S.~M. Barros, and
  D.~C.~L. Vieira, ``A comparative study on concept drift detectors,''
  \emph{Expert Systems with Applications}, vol.~41, no.~18, pp. 8144--8156,
  2014.

\bibitem{lukats_2025}
D.~Lukats, O.~Zielinski, A.~Hahn, and F.~Stahl, ``A benchmark and survey of
  fully unsupervised concept drift detectors on real-world data streams,''
  \emph{International Journal of Data Science and Analytics}, vol.~19, no.~1,
  pp. 1--31, 2025.

\bibitem{stirling_2018}
M.~Stirling, Y.~S. Koh, P.~Fournier-Viger, and S.~D. Ravana, ``Concept drift
  detector selection for {Hoeffding} adaptive trees,'' in \emph{{AI} 2018:
  Advances in Artificial Intelligence}, ser. {LNCS}, vol. 11320.\hskip 1em plus
  0.5em minus 0.4em\relax Springer, 2018, pp. 730--736.

\bibitem{read_2018}
J.~Read, ``Concept-drifting data streams are time series; {The} case for
  continuous adaptation,'' \emph{arXiv preprint arXiv:1810.02266}, 2018.

\bibitem{gama_ddm_2004}
J.~Gama, P.~Medas, G.~Castillo, and P.~Rodrigues, ``Learning with drift
  detection,'' in \emph{Advances in Artificial Intelligence ({SBIA} 2004)},
  ser. {LNCS}, vol. 3171.\hskip 1em plus 0.5em minus 0.4em\relax Springer,
  2004, pp. 286--295.

\bibitem{baena_eddm_2006}
M.~Baena-Garc{\'\i}a, J.~del Campo-{\'A}vila, R.~Fidalgo-Merino, A.~Bifet,
  R.~Gavald{\`a}, and R.~Morales-Bueno, ``{Early Drift Detection Method},'' in
  \emph{Proc. 4th {ECML} {PKDD} Int. Workshop on Knowledge Discovery from Data
  Streams}, vol.~6, Berlin, Germany, 2006, pp. 77--86.

\bibitem{ross_ecdd_2012}
G.~J. Ross, N.~M. Adams, D.~K. Tasoulis, and D.~J. Hand, ``Exponentially
  weighted moving average charts for detecting concept drift,'' \emph{Pattern
  Recognition Letters}, vol.~33, no.~2, pp. 191--198, 2012.

\bibitem{raab_kswin_2020}
C.~Raab, M.~Heusinger, and F.-M. Schleif, ``Reactive soft prototype computing
  for concept drift streams,'' \emph{Neurocomputing}, vol. 416, pp. 340--351,
  2020.

\bibitem{david_order_2003}
H.~A. David and H.~N. Nagaraja, \emph{Order Statistics}, 3rd~ed.\hskip 1em plus
  0.5em minus 0.4em\relax Hoboken, NJ: Wiley, 2003.

\bibitem{bifet_hat_2009}
A.~Bifet and R.~Gavald{\`a}, ``Adaptive learning from evolving data streams,''
  in \emph{Advances in Intelligent Data Analysis VIII ({IDA} 2009)}, ser.
  {LNCS}, vol. 5772.\hskip 1em plus 0.5em minus 0.4em\relax Springer, 2009, pp.
  249--260.

\bibitem{tartakovsky_2005_multichart}
A.~G. Tartakovsky, ``Asymptotic performance of a multichart {CUSUM} test under
  false alarm probability constraint,'' in \emph{Proc. 44th {IEEE} Conf.
  Decision and Control and European Control Conf. ({CDC}-{ECC}'05)}, 2005, pp.
  320--325.

\bibitem{nikiforov_2012}
B.~K. Gu{\'e}pi{\'e}, L.~Fillatre, and I.~V. Nikiforov, ``Sequential detection
  of transient changes,'' \emph{Sequential Analysis}, vol.~31, no.~4, pp.
  528--547, 2012.

\bibitem{tartakovsky_2023_transient}
G.~Sokolov, V.~S. Spivak, and A.~G. Tartakovsky, ``Detecting an intermittent
  change of unknown duration,'' \emph{Sequential Analysis}, vol.~42, no.~3, pp.
  269--302, 2023.

\bibitem{hoeffding_1963}
W.~Hoeffding, ``Probability inequalities for sums of bounded random
  variables,'' \emph{Journal of the American Statistical Association}, vol.~58,
  no. 301, pp. 13--30, 1963.

\bibitem{demsar_statistical_2006}
\BIBentryALTinterwordspacing
J.~Dem{\v{s}}ar, ``Statistical comparisons of classifiers over multiple data
  sets,'' \emph{Journal of Machine Learning Research}, vol.~7, no.~1, pp.
  1--30, 2006. [Online]. Available:
  \url{https://jmlr.org/papers/v7/demsar06a.html}
\BIBentrySTDinterwordspacing

\bibitem{gomes_srp_2019}
H.~M. Gomes, J.~Read, and A.~Bifet, ``Streaming random patches for evolving
  data stream classification,'' in \emph{Proc. 2019 {IEEE} Int. Conf. Data
  Mining ({ICDM})}, 2019, pp. 240--249.

\bibitem{street_kim_2001}
W.~N. Street and Y.~Kim, ``A streaming ensemble algorithm ({SEA}) for
  large-scale classification,'' in \emph{Proc. 7th {ACM} {SIGKDD} Int. Conf.
  Knowledge Discovery and Data Mining ({KDD})}, 2001, pp. 377--382.

\bibitem{jesus_baf_2022}
S.~Jesus, J.~Pombal, D.~Alves, A.~Cruz, P.~Saleiro, R.~P. Ribeiro, J.~Gama, and
  P.~Bizarro, ``Turning the tables: {B}iased, imbalanced, dynamic tabular
  datasets for {ML} evaluation,'' in \emph{Advances in Neural Information
  Processing Systems ({NeurIPS} 2022, Datasets and Benchmarks)}, vol.~35.\hskip
  1em plus 0.5em minus 0.4em\relax Curran Associates, Inc., 2022, pp.
  33\,563--33\,575.

\bibitem{souza_insects_2020}
V.~M.~A. Souza, D.~M. dos Reis, A.~G. Maletzke, and G.~E. A. P.~A. Batista,
  ``Challenges in benchmarking stream learning algorithms with real-world
  data,'' \emph{Data Mining and Knowledge Discovery}, vol.~34, no.~6, pp.
  1805--1858, 2020.

\bibitem{hopcroft_karp_1973}
J.~E. Hopcroft and R.~M. Karp, ``An {$n^{5/2}$} algorithm for maximum matchings
  in bipartite graphs,'' \emph{SIAM Journal on Computing}, vol.~2, no.~4, pp.
  225--231, 1973.

\bibitem{bifet_leveraging_2010}
A.~Bifet, G.~Holmes, and B.~Pfahringer, ``Leveraging {Bagging} for evolving
  data streams,'' in \emph{Machine Learning and Knowledge Discovery in
  Databases ({ECML} {PKDD} 2010)}, ser. {LNCS}, vol. 6321.\hskip 1em plus 0.5em
  minus 0.4em\relax Springer, 2010, pp. 135--150.

\end{thebibliography}

\end{document}